\newcommand{\deletethis}[1]{ { \color{red} delete :{#1}}  }
\newtheorem{lemma}{Lemma}
\newtheorem{theorem}{Theorem}
\newtheorem{assumption}{AS.}
\newcommand{\va}{{\mathbf {a}}}
\newcommand{\vg}{{\mathbf {g}}}
\newcommand{\vh}{{\mathbf {h}}}
\newcommand{\vs}{{\mathbf {s}}}
\newcommand{\vv}{{\mathbf {v}}}
\newcommand{\vx}{{\mathbf {x}}}
\newcommand{\vy}{{\mathbf {y}}}
\newcommand{\homovec}[1]{{(#1, \hspace{2pt} 1)}}
\title{Practical {Quasi-Newton} Methods for Training Deep Neural Networks
}
\author{%
  Donald Goldfarb, Yi Ren, Achraf Bahamou 
    \\
  Department of Industrial Engineering and Operations Research \\
  Columbia University\\
  New York, NY 10027 \\
  \texttt{ \{goldfarb, yr2322, ab4689\}@columbia.edu} \\
}
\begin{document}

\maketitle


\begin{abstract}

We consider the development of practical stochastic quasi-Newton, and in particular  Kronecker-factored block-diagonal BFGS and L-BFGS methods, for training deep neural networks (DNNs).
In DNN training, the number of variables and components of the gradient $n$ is often of the order of tens of millions and the Hessian has $n^2$ elements. Consequently, computing and storing a full $n \times n$ BFGS approximation or storing a modest number of (step, change in gradient) vector pairs for use in an L-BFGS implementation is out of the question. In our proposed methods, we approximate the Hessian by a block-diagonal matrix and use the structure of the gradient and Hessian to further approximate these blocks, each of which corresponds to a layer, as the Kronecker product of two much smaller matrices. This is analogous to the approach in KFAC \cite{martens2015optimizing}, which computes a Kronecker-factored block-diagonal approximation to the Fisher matrix in a stochastic natural gradient method. Because of the indefinite and highly variable nature of the Hessian in a DNN, we also propose a new damping approach to keep the upper as well as the lower bounds of the BFGS and L-BFGS approximations bounded. In tests on autoencoder feed-forward neural network models with either nine or thirteen layers applied to three datasets, our methods outperformed or performed comparably to KFAC and state-of-the-art first-order stochastic methods.
\end{abstract}

\section{Introduction}
We consider in this paper the development of practical stochastic quasi-Newton (QN), and in particular Kronecker-factored block-diagonal BFGS \cite{broyden1970convergence,fletcher1970new,goldfarb1970family,shanno1970conditioning} and L-BFGS \cite{liu1989limited}, methods for training  deep neural networks (DNNs). 
 Recall that the BFGS method starts each iteration with a symmetric positive definite matrix $B$ (or $H= B^{-1}$) that approximates the current Hessian matrix (or its inverse), computes the gradient
$\mathbf{\nabla f}$ of $f$ at the current iterate $\mathbf{x}$ and then takes a step
$\mathbf{s} = - \alpha H \mathbf{\nabla f}$, where $\alpha$ is a step length (usually) determined by some inexact line-search procedure, such that
$\mathbf{y}^\top \mathbf{s} > 0$, where
$\mathbf{y} = \mathbf{\nabla f}^+ - \mathbf{\nabla f}$ and $\mathbf{\nabla f}^+$ is the gradient of $f$ at the new point $\mathbf{x}^+ = \mathbf{x} + \mathbf{s}$.
The method then computes 
an updated approximation $B^+$ to $B$ (or $H^+$ to $H$) that remains symmetric and positive-definite and satisfies the so-called {\it quasi-Newton} (QN) condition $B^+ \mathbf{s} =\mathbf{y}$ (or equivalently, $H^+ \mathbf{y} =\mathbf{s}$).
A consequence of this is that the matrix $B^+$ operates on the vector $\mathbf{s}$ in exactly the same way as the average of the Hessian matrix along the line segment between $\mathbf{x}$ and $\mathbf{x}^+$ operates on $\mathbf{s}$.

In DNN training, the number of variables and components of the gradient $n$ is often of the order of tens of millions and the Hessian has $n^2$ elements. Hence, computing and storing a full $n \times n$ BFGS approximation or storing $p$ $(\mathbf{s}, \mathbf{y} )$ pairs, where $p$ is approximately $10$ or larger for use in an L-BFGS implementation, is out of the question.
Consequently, in our methods, we approximate the Hessian by a block-diagonal matrix,
where each diagonal block corresponds to a layer,
further approximating them
as the Kronecker product of two much smaller matrices, as
in
\cite{martens2015optimizing,botev2017practical,gupta2018shampoo,dangel2019modular}.

\textbf{Literature Review on Using Second-order Information for DNN Training. }
For solving the stochastic optimization problems with high-dimensional data that arise in machine learning (ML), stochastic gradient descent (SGD) \cite{robbins1951stochastic} and its variants are the methods that are most often used, especially for training DNNs.
These variants include such methods as AdaGrad \cite{duchi2011adaptive}, RMSprop \cite{hinton2012neural}, and Adam \cite{kingma2014adam}, all of which scale the stochastic gradient by a diagonal matrix based on estimates of the first and second moments of the individual gradient components. Nonetheless, there has been a lot of effort to find ways to take advantage of second-order information in solving ML optimization problems. Approaches have run the gamut from the use of a diagonal re-scaling of the stochastic gradient, based on the secant condition associated with quasi-Newton (QN) methods \cite{bordes2009sgd}, to sub-sampled Newton methods (e.g. see \cite{xu2019newton}, and references therein), including those that solve the Newton system using the linear conjugate gradient method (see \cite{byrd2011use}).

In between these two extremes are stochastic methods that are based either on QN methods or generalized Gauss-Newton (GGN) and natural gradient \cite{amari2000adaptive}
methods. For example, a stochastic
L-BFGS method for solving strongly convex problems was proposed in
\cite{byrd2016stochastic} that uses sampled Hessian-vector products rather than gradient differences, which was proved in
\cite{moritz2016linearly} to be linearly convergent by incorporating the variance reduction technique (SVRG \cite{johnson2013accelerating}) to alleviate the effect of noisy gradients. A closely related variance reduced block L-BFGS method was proposed in \cite{gower2016stochastic}.
A regularized stochastic BFGS method was proposed in \cite{mokhtari2014res}, and an online L-BFGS method {was proposed} in
\cite{mokhtari2015global} for strongly convex problems and extended in \cite{lucchi2015variance} to incorporate SVRG variance reduction. Stochastic BFGS and L-BFGS methods were also developed for online convex optimization in \cite{schraudolph2007stochastic}. For nonconvex problems, a damped L-BFGS method which incorporated SVRG variance reduction was developed and its convergence properties was studied in \cite{wang2017stochastic}.

GGN methods that approximate the Hessian have been proposed, including the Hessian-free method \cite{martens2010deep} and the Krylov subspace method \cite{vinyals2012krylov}. Variants of the closely related natural gradient method
that use block-diagonal approximations to the Fisher information matrix, where blocks correspond to
layers, have been proposed in e.g. 
\cite{heskes2000,desjardins2015natural,martens2015optimizing,fujimoto2018neural}. Using further approximation of each of these (empirical) Fisher matrix and GGN blocks by the Kronecker product of two much smaller matrices, the efficient KFAC \cite{martens2015optimizing}, KFRA \cite{botev2017practical}, EKFAC \cite{george2018fast}, and Shampoo \cite{gupta2018shampoo}  methods were developed.
See also
\cite{Ba2017DistributedSO}
and \cite{dangel2019modular}, 
\cite{le2010fast}, which combine both Hessian and covariance (Fisher-like) matrix information in  stochastic Newton type methods,
Also, methods are given in \cite{li2018natural, wang2020information}  that replace the Kullback-Leibler divergence by the Wasserstein distance to define the natural gradient, but with a greater computational cost.



\textbf{Our Contributions. }
The main contributions of this paper can be summarized as follows:
\begin{enumerate}[topsep=0pt,itemsep=-1ex,partopsep=1ex,parsep=1ex,wide]
     \item 
     New BFGS and limited-memory variants (i.e. L-BFGS) that take advantage of the structure of feed-forward DNN training problems;
    \item Efficient non-diagonal second-order algorithms for deep learning that require a comparable amount of memory and computational cost per iteration as first-order methods;
    \item A new damping scheme for BFGS and L-BFGS updating of an inverse Hessian approximation, that not only preserves its positive definiteness, but also limits the decrease (and increase) in its smallest (and largest) eigenvalues for non-convex problems;
    \item A novel application of Hessian-action BFGS;
    \item The first proof of convergence (to the best of our knowledge) of a stochastic Kronecker-factored
    {quasi-Newton}
    method.
\end{enumerate}


\section{Kronecker-factored Quasi-Newton Method for DNN}


After reviewing the computations used in
DNN training, 
we describe the Kronecker structures of the gradient and Hessian for a single data point, followed by their extension to approximate expectations of these quantities for multiple data-points and give a generic algorithm that employs
BFGS (or L-BFGS) approximations for the Hessians.

\textbf{Deep Neural Networks.}
We consider a feed-forward DNN with $L$ layers, defined by weight matrices $W_l$ (whose last columns are bias vectors $b_l$), activation functions $\phi_l$ for $l \in \{ 1\dots L\}$ and loss function $\mathcal{L}$.
For a data-point $(x, y)$, the loss $\mathcal{L} \left(a_{L}, y\right)$ between the output $a_{L}$ of the DNN and $y$ is a non-convex function of $\theta= \left[\operatorname{vec}\left(W_{1}\right)^{\top},  \ldots, \operatorname{vec}\left(W_{L}\right)^{\top}\right]^{\top}$. 
The network's forward and backward pass for a single input data point $(x, y)$ is described in Algorithm \ref{feedforward}.

\begin{algorithm}[ht]
    \caption{Forward and backward pass of DNN {for a single data-point}}
    \label{feedforward}
    \begin{algorithmic}[1]
    
    \STATE Given input $(x,y)$, 
    weights (and biases) $W_l$, and activations $\phi_l$ for $l \in [1, L]$
    
    
    \STATE
    $\va_0 = x$;
    {\bf for}
    {$l = 1, .., L$}
    {\bf do}
    $\bar{\va}_{l-1} = \homovec{\va_{l-1}}$; 
    $\vh_{l} = {W}_l \bar{\va}_{l-1} $;
    $\va_{l} = \phi_l(\vh_{l})$

     \STATE $\left.\mathcal{D} \va_{L} \leftarrow \frac{\partial \mathcal{L}(z, y)}{\partial z}\right|_{z=\va_{L}}$

    \STATE
    {
    {\bf for}
    {$l = L, .., 1$}
    {\bf do}
    $\vg_{l} = \mathcal{D} \va_{l} \odot \phi_{l}^{\prime}(\vh_{l})$;
    $\mathcal{D} {W}_l = \vg_{i} \bar{\va}_{i-1}^{\top}$;
    $\mathcal{D} \va_{l-1} = W_{l}^{\top} \vg_{l}$
    }
    
    \end{algorithmic}
\end{algorithm}

For a training dataset that contains multiple data-points indexed by
{$i = 1, ..., I$}, let
{$f(i; \theta)$} denote the loss for the
{$i$th} data-point. Then, viewing the dataset as an empirical distribution, the total loss function $f(\theta)$ that we wish to minimize is
$$
    f(\theta) := \mathbb{E}_i [f(i; \theta)] := \frac{1}{I} \sum_{i=1}^I f(i; \theta).
$$


\textbf{Single Data-point: Layer-wise Structure of the Gradient and Hessian. }
Let ${\mathbf{\nabla f}}_l$ and $\nabla^2 f_l$ denote, respectively, the restriction of $\mathbf{\nabla f}$ and $\nabla^2 f$ to the weights $W_l$ in layer $l = 1, \ldots,L$. 
For a single data-point 
${\mathbf{\nabla f}}_l$ and $\nabla^2 f_l$ 
have a tensor (Kronecker) structure,
as shown in \cite{martens2015optimizing} and \cite{botev2017practical}. 
%
Specifically, 
%
%
\begin{align}
& {\mathbf{\nabla f}}_l(i)
= \vg_l(i) (\va_{l-1}(i))^\top,
\quad
{\rm equivalently,}
\quad
\text{vec}({\mathbf{\nabla f}}_l(i)) =  \va_{l-1}(i) \otimes \vg_l(i),
\label{eq_7}
\\
& 
    \nabla^2 f_l(i) = (\va_{l-1}(i) (\va_{l-1}(i))^\top) \otimes G_l(i),
    \label{eq_8}
\end{align}
where the {pre-activation} gradient $\vg_l(i) = \frac{\partial f(i)}{\partial \vh_l(i)}$,
and the {pre-activation} Hessian
$G_l(i) = \frac{\partial^2 f(i)}{\partial \vh_l(i)^2}$.
Our algorithm uses an approximation to   
{$(G_l(i))^{-1}$},
which is updated via the BFGS  updating formulas based upon a secant condition that relates
the change in $\vg_l(i)$ with the change in $\vh_l(i)$.


Although we focus on fully-connected layers in this paper, the idea of Kronecker-factored approximations to the diagonal blocks $\nabla^2 f_l$, $l = 1,\ldots,L$ of the Hessian can be extended to other layers used in deep learning, such as convolutional and recurrent layers.


\textbf{Multiple Data-points: Kronecker-factored QN Approach. }
Now consider the case where we have a dataset of
{$I$} data-points indexed by
{$i = 1, \ldots, I$}. By (\ref{eq_8}), we have
\begin{align}
    \mathbb{E}_i [\nabla^2 f_l (i)]
   \approx \mathbb{E}_i \left[ \va_{l-1}(i) (\va_{l-1}(i))^\top \right] \otimes \mathbb{E}_i \left[ G_l(i) \right]
    := A_l \otimes G_l
    \label{eq_12}
\end{align}
Note that the approximation in (\ref{eq_12}) that the expectation of the Kronecker product of two matrices equals the Kronecker product of their expectations is the same as the one used by KFAC \cite{martens2015optimizing}. Now, based on this structural approximation, we use
$H^l = H^l_a \otimes H^l_g$
as our QN approximation to
{$\left( \mathbb{E}_i [\nabla^2 f_l (i)] \right)^{-1}$}, where $H^l_a $ and $H^l_g$ are positive definite approximations to  $A_l^{-1}$ and $G_l^{-1}$, respectively.
Hence, using our layer-wise block-diagonal approximation to the Hessian, a step in our algorithm for each layer $l$ is computed as
\begin{align}
   \text{vec}(W_l^+) -    \text{vec}(W_l)
    = - \alpha H^l \text{vec} \left( \widehat{{\mathbf{\nabla f}}_l} \right)
    = - \alpha (H^l_a \otimes H^l_g) \text{vec} \left( \widehat{{\mathbf{\nabla f}}_l} \right)
    = - \alpha \text{vec} \left( H^l_g \widehat{{\mathbf{\nabla f}}_l} H^l_a \right),
    \label{eq_1}
\end{align}
where $\widehat{\mathbf{\nabla f}_l}$ denotes the estimate to
{$\mathbb{E}_i [\mathbf{\nabla f}_l(i)]$} and $\alpha$ is the learning rate. After computing $W_l^+$ and 
performing another forward/backward pass, our method
computes or updates $H_a^l$ and $H_g^l$ as follows:

\begin{enumerate}[topsep=0pt,itemsep=-1ex,partopsep=1ex,parsep=1ex,wide]
    \item 
    For $H_g^l$, we use a damped version of BFGS (or L-BFGS) (See Section \ref{sec_1}) based on the 
    $(\vs, \vy)$ pairs corresponding to the average change in
    {$\vh_l(i)$}  
    and in the gradient with respect to
    {$\vh_l(i)$}; i.e.,
    \begin{align}
        \vs_g^l = \mathbb{E}_i [\vh_l^+(i)] - \mathbb{E}_i [\vh_l(i)],
        \qquad
        \vy_g^l = \mathbb{E}_i [\vg_l^+(i)] - \mathbb{E}_i [\vg_l(i)].
        \label{eq_11}
    \end{align}

    \item For $H_a^l$ 
    we use the "Hessian-action" BFGS method described in Section \ref{sec_4}. The issue of possible singularity of the positive semi-definite matrix $A_l$ approximated by $(H_a^l)^{-1}$
    is also addressed there by incorporating a {\bf Levenberg-Marquardt (LM)} damping term.
\end{enumerate}



 Algorithm \ref{klbfgs} gives a high-level summary of 
our {\bf K-BFGS} or {\bf K-BFGS(L)} algorithms (which use BFGS or L-BFGS to update $H_g^l$, respectively).
See Algoirthm \ref{algo_3} in the appendix for
a detailed pseudocode.
The use of mini-batches is described in Section \ref{sec_17}.
Note that, an additional forward-backward pass is used in Algorithm \ref{klbfgs} because the quantities in (\ref{eq_11}) need to be estimated using the same mini-batch.

\begin{algorithm}
    \caption{
    {High-level summary of}
    K-BFGS / K-BFGS(L)
    }
    \label{klbfgs}
    \begin{algorithmic}[1]

    \REQUIRE
    Given {initial weights} $\theta$, batch size $m$, learning rate
    {$\alpha$}

    \FOR {$k=1,2,\ldots$}
        \STATE Sample mini-batch of size {$m$}: $M_k = \{\xi_{k,i}, i =1, \dots, m\}$
        
        \STATE Perform a forward-backward pass over the current mini-batch $M_k$ (see Algorithm \ref{feedforward})
        
            
        
        {
        \STATE
        \label{line_6}
        {\bf for}
        $l=1, \ldots, L$
        {\bf do}
             $p_l = H_g^l \widehat{\mathbf{\nabla f}}_l H_a^l$;
            $W_l = W_l - \alpha \cdot p_{l}$
        }
        
        \STATE Perform another forward-backward pass over $M_k$ to get $(\vs_g^l, \vy_g^l)$
        \label{line_1}
        
        \STATE Use {damped} BFGS or L-BFGS to update $H_g^l$ ($l = 1, ..., L$) (see Section \ref{sec_1}, {in particular Algorithm \ref{algo_2}})
        
        \STATE Use Hessian-action BFGS
        to update $H_a^l$ ($l = 1, ..., L$) {(see Section \ref{sec_4}
        )}

        \ENDFOR  
    
    \end{algorithmic}
\end{algorithm}

\section{{BFGS and L-BFGS for \texorpdfstring{$G_l$}{TEXT}}}
\label{sec_1}

\textbf{Damped BFGS Updating.}
It is well-known that training a DNN is a 
non-convex optimization problem. As (\ref{eq_8}) and (\ref{eq_12}) show, this non-convexity  manifests in the fact that $G_l \succ 0$ often does not hold. 
Thus, for the BFGS update of $H_g^l$, the approximation to $G_l^{-1}$, to remain positive definite, we have to
ensure that $(\vs_g^l)^\top \vy_g^l > 0$. 
Due to the stochastic setting, ensuring this condition
by line-search, as is done in deterministic settings, is impractical.
In addition, due to the large changes in curvature in DNN models that occur as the parameters are varied, we also need to suppress large changes to  $H_g^l$ as it is updated. 
To deal with both of these issues,
we propose a \textbf{double damping (DD)} procedure (Algorithm \ref{algo_2}), which is based upon {\bf Powell's damped-BFGS} approach \cite{powell1978algorithms}, for modifying the $(\vs_g^l, \vy_g^l)$ pair.
To motivate Algorithm \ref{algo_2}, consider the formulas used for BFGS updating of $B$ and $H$:
\begin{align}
    B^+ = B - \frac{B \vs \vs^\top B}{\vs^\top B \vs} + \rho \vy \vy^\top,
    \quad
    H^+ = (I - \rho \vs \vy^\top) H (I - \rho \vy \vs^\top) + \rho \vs \vs^\top,
    \label{eq_24}
\end{align}
where $\rho = \frac{1}{\vs^\top \vy} > 0$.   
If we can ensure that
{$0 < \frac{\vy^\top H \vy}{\vs^\top \vy} \leq \frac{1}{\mu_1}$}
and
{$0 < \frac{\vs^\top\vs}{\vs^\top \vy} \leq \frac{1}{\mu_2}$}
, then we can obtain the following bounds:
{
\begin{align}
    \| B^+\| & \leq \|B - \frac{B \vs \vs^\top B}{\vs^\top B \vs}\| + \|\rho \vy \vy^\top \| 
     \leq \|B\| + \| \frac{ B^{1/2}H^{1/2} \vy \vy^\top H^{1/2}B^{1/2}}{\vs^\top \vy} \|
    \\
    & \leq \|B\| + \|B\| \frac{ \|H^{1/2} \vy \|^2}{\vs^\top \vy} 
    \leq \|B\| \left( 1 + \frac{\vy^\top H \vy} {\vs^\top \vy} \right)
    \leq \|B\| \left( 1 + \frac{1}{\mu_1}\right) 
    \label{eq_21}
\end{align}
}
and 
{
\begin{align}
    \|H^+ \| 
    & \leq \| H^{1/2} - 
\frac{ \vs \vy^\top H^{1/2} }{ \vs^\top\vy}\|^2 + \|\frac{\vs \vs^\top} 
    {\vs^\top \vy} \|
    \leq \left( \| H^{1/2}\| + \frac{ \|\vs\| \|H^{1/2}\vy\|}{ \vs^\top\vy} \right)^2 + \frac{\|\vs\|^2} {\vs^\top \vy}
    \\
    &\leq \left( \| H^{1/2}\| + (\frac{ \vs^\top\vs} {\vs^\top \vy} )^{1/2} 
             ( \frac{\vy^\top H \vy}{ \vs^\top\vy} )^{1/2} \right)^2 + \frac{\vs^\top\vs} {\vs^\top \vy}
             \leq \left(\| H^{1/2}\| + \frac{1}{\sqrt{\mu_1 \mu_2}}\right)^2 +  \frac{1}{\mu_2}.
    \label{eq_22}
\end{align}
}

Thus, the change in $B$ (and $H$) is controlled if
{$\frac{\vy^\top H \vy}{\vs^\top \vy} \leq \frac{1}{\mu_1}$}
and
{$\frac{\vs^\top\vs}{\vs^\top \vy} \leq \frac{1}{\mu_2}$}.
Our DD approach is a two-step procedure, where the first step (i.e. Powell's damping of $H$) guarantees that
{$\frac{\vy^\top H \vy}{\vs^\top \vy} \leq \frac{1}{\mu_1}$} 
and the second step (i.e., {Powell's damping with $B=I$}) guarantees that
{$\frac{\vs^\top\vs}{\vs^\top \vy} \leq \frac{1}{\mu_2}$}.
Note that there is no guarantee of
{$\frac{\vy^\top H \vy}{\vs^\top \vy} \leq \frac{1}{\mu_1}$} after the second step. However, we can skip updating $H$
in this case so that the bounds on these matrices hold.
In our implementation, we always do the update, since in empirical testing, we observed that at least 90\% of the pairs
satisfy $\frac{\vy^\top H \vy}{\vs^\top \vy} \leq \frac{2}{\mu_1}$.
See Section \ref{sec_5} in the appendix for more details on damping.

\begin{algorithm}
    \caption{Double Damping (DD)}
    \label{algo_2}
    \begin{algorithmic}[1]
    
    \STATE
    {\bf Input:}
    $\vs$, $\vy$;
    {\bf Output:}
    $\tilde{\vs}$, $\tilde{\vy}$;
    {\bf Given:}
    $H, \mu_1, \mu_2$

    \STATE
    \label{line_7}
    {\bf if}
    {$\vs^\top \vy < \mu_1 \vy^\top H \vy$}
    {\bf then}
    {$\theta_1 = \frac{(1-\mu_1) \vy^\top H \vy}{\vy^\top H \vy - \vs^\top \vy}$}
    {\bf else}
    $\theta_1 = 1$

    \STATE 
    \label{line_8}
    $\tilde{\vs} = \theta_1 \vs + (1-\theta_1) H \vy$
    \COMMENT{Powell's damping on $H$}

    \STATE
    \label{line_9}
    {\bf if}
    {$\tilde{\vs}^\top \vy < \mu_2 \tilde{\vs}^\top \tilde{\vs}$}
    {\bf then}
    $\theta_2 = \frac{(1-\mu_2) \tilde{\vs}^\top \tilde{\vs}}{\tilde{\vs}^\top \tilde{\vs} - \tilde{\vs}^\top \vy}$
    {\bf else}
    $\theta_2 = 1$
    
    \STATE 
    \label{line_10}
    $\tilde{\vy} = \theta_2 \vy + (1-\theta_2) \tilde{\vs}$
    \COMMENT{Powell's damping with $B=I$}

    \RETURN $\tilde{s}$, $\tilde{y}$
    
    \end{algorithmic}
\end{algorithm}

\textbf{L-BFGS Implementation. }
L-BFGS can also be used to update $H_g^l$. 
However, implementing L-BFGS using the standard "two-loop recursion" (see Algorithm 7.4 in \cite{nocedal2006numerical}) is not efficient. This is because the main work in computing 
$H^l_g \widehat{{\mathbf{\nabla f}}_l} H^l_a$ in 
line \ref{line_6} of Algorithm \ref{klbfgs} would require 
$4p$ 
matrix-vector
multiplications, each requiring $O(d_i d_o)$ operations, where $p$ denotes the number of $(\vs,\vy)$ pairs stored by L-BFGS. (Recall that $\widehat{{\mathbf{\nabla f}}_l} \in R^{d_o \times d_i}$.) 
Instead, we use a "non-loop" implementation \cite{byrd1994representations} of L-BFGS, whose main work involves
2 matrix-matrix multiplications, each requiring $O(p d_i d_o)$ operations.
When $p$ is not small (we used $p=100$ in our tests), and $d_i$ and $d_o$ are large, this is much more efficient, especially on GPUs.


\section{"Hessian action" BFGS for \texorpdfstring{$A_l$}{TEXT}}
\label{sec_4}

In addition to approximating $G_l^{-1}$ by $H_g^l$ using BFGS, we also propose approximating $A_l^{-1}$ by $H_a^l$ using BFGS. Note that $A_l$ does not {correspond to some} Hessian {of the objective function}. However, we can generate $(\vs, \vy)$ pairs {for it} by "Hessian action" (see e.g. \cite{byrd2016stochastic,gower2016stochastic,gower2017randomized}). 

\textbf{Connection between Hessian-action BFGS and Matrix Inversion.}
In our methods, we choose
{$\vs = H^l_a \cdot \mathbb{E}_i [\va_{l-1}(i)]$} and $\vy = A_l \vs$, which as we now show, is closely connected to using the Sherman-Morrison modification formula to invert $A_l$.
%
%
%
%
%
In particular, suppose that $A^+ = A + {c \cdot} \va \va^\top$; i.e., only a rank-one update is made to $A$. 
This corresponds to the case where {the information of $A$ is accumulated from iteration to iteration, and} the size of the mini-batch is 1 or $\va$ represents the average of the vectors
{$\va(i)$}
{from multiple data-points}. 

%

\begin{theorem}
\label{thm_1}
Suppose that $A$ and $H$ are symmetric and positive definite, and that $H = A^{-1}$. If we choose $\vs = H \va$ and $\vy = A^+ \vs$, where
$A^+ = A + c \cdot \va \va^\top$($c > 0$).
Then, the $H^+$ generated by any QN update in the \textbf{Broyden family}
\begin{align}
    H^+
    = H - \sigma H \vy \vy^\top {H} + \rho \vs \vs^\top + \phi (\vy^\top H \vy) \vh \vh^\top, \label{eq_{4.1}}
\end{align}
where $\rho = 1/\vs^\top \vy$, $\sigma = 1/\vy^\top H \vy$, $\vh = \rho \vs - \sigma H \vy$ and $\phi$ is a scalar parameter in $[0, 1]$,
equals $(A^+)^{-1}$. Note that $\phi = 1$ yields the BFGS update {(\ref{eq_24})} and $\phi = 0$ yields the DFP update.
\end{theorem}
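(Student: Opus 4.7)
The key observation is that because $H$ already equals $A^{-1}$ and the update $A^+ - A = c\va\va^\top$ is rank one, the vectors $\vs$ and $\vy$ end up collinear with $H\va$ and $\va$ respectively, which forces the ``free'' direction $\vh$ in the Broyden family to vanish. Once that happens, every member of the family reduces to the same rank-two update, and a direct comparison with Sherman--Morrison finishes the job.

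The plan is as follows. First, I would compute $\vs$ and $\vy$ explicitly. Since $\vs = H\va = A^{-1}\va$, we get
\begin{equation*}
    \vy = A^+\vs = (A + c\va\va^\top) H \va = \va + c(\va^\top H \va)\va = (1+\beta)\va,
\end{equation*}
where $\beta := c\,\va^\top H \va > 0$. In particular $\vy$ is parallel to $\va$, hence $H\vy = (1+\beta)H\va = (1+\beta)\vs$, i.e.\ $H\vy$ is parallel to $\vs$ as well.

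Second, I would compute the scalars appearing in (\ref{eq_{4.1}}):
\begin{equation*}
    \vs^\top\vy = (1+\beta)\va^\top H \va = \tfrac{(1+\beta)\beta}{c},
    \qquad
    \vy^\top H \vy = (1+\beta)^2\va^\top H \va = \tfrac{(1+\beta)^2\beta}{c},
\end{equation*}
so $\rho = c/((1+\beta)\beta)$ and $\sigma = c/((1+\beta)^2\beta)$. A one-line check then shows $\rho\vs = \sigma H\vy$, i.e.\ $\vh = \rho\vs - \sigma H\vy = 0$. Consequently the $\phi$-term in (\ref{eq_{4.1}}) disappears and the update reduces to $H^+ = H - \sigma H\vy\vy^\top H + \rho\vs\vs^\top$, independent of the Broyden parameter $\phi$.

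Third, I would substitute $H\vy = (1+\beta)\vs$ and the values of $\rho,\sigma$ to collapse both rank-one correction terms onto $\vs\vs^\top$:
\begin{equation*}
    H^+ = H + \left(\rho - \sigma(1+\beta)^2\right)\vs\vs^\top = H - \frac{c}{1+\beta}\vs\vs^\top.
\end{equation*}
Finally, I would invoke the Sherman--Morrison formula for $A^+ = A + c\va\va^\top$, which gives
\begin{equation*}
    (A^+)^{-1} = H - \frac{c\,H\va\va^\top H}{1 + c\va^\top H \va} = H - \frac{c}{1+\beta}\vs\vs^\top,
\end{equation*}
matching $H^+$ exactly. The only slightly delicate step is verifying $\vh=0$; everything else is bookkeeping. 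Once that cancellation is noted, the theorem is essentially immediate, and the equality of $H^+$ with $(A^+)^{-1}$ for every $\phi\in[0,1]$, including the BFGS ($\phi=1$) and DFP ($\phi=0$) endpoints, follows.
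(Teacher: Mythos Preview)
Your proof is correct. Both you and the paper begin with the same crucial observation that $\vh = \rho\vs - \sigma H\vy = 0$, which kills the $\phi$-dependence. After that point, however, the arguments diverge. The paper verifies $H^+ A^+ = I$ by checking the action of $H^+A^+$ on a spanning set: on $\vs$ it uses the quasi-Newton condition $H^+\vy = \vs$, and on any $\vv \perp \va$ it uses $A^+\vv = A\vv$ together with $\vs^\top A\vv = \vy^\top HA\vv = 0$ to conclude $H^+A\vv = HA\vv = \vv$. You instead substitute $H\vy = (1+\beta)\vs$ into the update, collapse the two rank-one corrections into $-\tfrac{c}{1+\beta}\vs\vs^\top$, and match the result against the Sherman--Morrison formula for $(A + c\va\va^\top)^{-1}$. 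Interestingly, this explicit computation is exactly what the paper sketches in the remark \emph{following} its proof (``all updates in the Broyden family are equivalent to applying the Sherman--Morrison modification formula''). Your route is slightly more computational but also more self-contained, since it produces the closed form $H^+ = H - H\va(c^{-1}+\va^\top H\va)^{-1}\va^\top H$ directly; the paper's route is shorter and coordinate-free but relies on the QN condition and a spanning argument.
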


\begin{proof}
If $\vs = H \va$ and $\vy = A^+ \vs$, then $\vh =0$ 
, so all choices of $\phi$ yield the same matrix $H^+$. 
Since $H^+ A^+ \vs = H^+ \vy = \vs$ and for any vector $\vv $ that is orthogonal to $\va$, $H^+ A^+ \vv = H^+ A \vv = \vv$, since $\vs^\top A \vv = 0$ and $\vy^\top H A \vv = 0$, 
it follows that $H^+ A^+ = I$, using the fact that $\vs$ together with any linearly independent set of $n-1$ vectors orthogonal to $\va$ spans $R^n$. (Note that $\vs^\top \va = \va^\top H \va > 0$, since
{$H \succ 0 \Rightarrow $ that $\vs$ is not orthogonal to $\va$}.)
\end{proof}

In fact, all updates in the Broyden family are equivalent to applying the Sherman-Morrison modification formula to $A^+ = A + {c \cdot} \va \va^\top$,
given $H = A^{-1}$, since after substituting for $\vs$ and $\vy$ in (\ref{eq_{4.1}}) and simplifying, one obtains 
$$ H^+ = H - H \va (c^{-1} + \va^\top H \va)^{-1} \va^\top H.$$

When using momentum, $ A^+ = \beta A + (1-\beta) \va \va^\top$ ($0 < \beta < 1$). Hence, if we still want Theorem \ref{thm_1} to hold, we have to scale $H$ by $1/\beta$ before updating it. This, however, turns out to be unstable. Hence, in practice, we use the non-scaled version of "Hessian action" BFGS.




\textbf{Levenberg-Marquardt Damping for $A_l$.
}
Since {$A_l = \mathbb{E}_i \left[ (\va_{l-1}(i) (\va_{l-1}(i))^\top) \right] \succeq 0$}
may not be positive definite, or may  have very small positive eigenvalues, we add an {\bf Levenberg-Marquardt (LM) damping} term to make our "Hessian-action" BFGS stable; i.e., we use ${A_l} + \lambda_A I_A$ instead of ${A_l}$, when we {update} $H_a^l$.
{
Specifically, "Hessian action" BFGS for $A_l$ is performed as
\begin{enumerate}[topsep=0pt,itemsep=-1ex,partopsep=1ex,parsep=1ex]
    \item
    $A_l = \beta \cdot A_l + (1-\beta) \cdot \mathbb{E}_i \left[ \va_{l-1}(i) \va_{l-1}(i)^\top \right]$; $A_l^{\text{LM}} = A_l + \lambda_A I_A$.
    
    \item
    {$\vs_a^l = H_a^l \cdot \mathbb{E}_i [\va_{l-1}(i)]$},
    $\vy_a^l = A_l^{\text{LM}} \vs_a^l$; use BFGS with $(\vs_a^l, \vy_a^l)$ to update $H_a^l$.
\end{enumerate}
}




\section{Convergence Analysis
}

\newcommand{\R}{\mathbb{R}}
\newcommand{\E}{\mathbb{E}}
\newcommand{\diagentry}[1]{\mathmakebox[1.8em]{#1}}
\newcommand{\xddots}{%
  \raise 4pt \hbox {.}
  \mkern 6mu
  \raise 1pt \hbox {.}
  \mkern 6mu
  \raise -2pt \hbox {.}
}








Following the framework for stochastic quasi-Newton methods (SQN) established in \citep{wang2017stochastic} for solving nonconvex stochastic optimization problems (see Section \ref{sec_15} in the appendix for this framework), we prove that, under fairly standard assumptions, {for} our K-BFGS(L) algorithm with skipping DD and exact inversion on $A_l$ 
(see Algorithm \ref{klbfgs_2} in Section 
\ref{sec_15}),
the number of iterations $N$ needed to obtain
{$\frac{1}{N}\sum_{k=1}^N\E[\| \mathbf{\nabla f}(\theta_k) \|^2]\le \epsilon$} is $N=O(\epsilon^{-\frac{1}{1-\beta}})$, for {step size} $\alpha_k$ chosen proportional to $k^{-\beta}$, where $\beta\in(0.5,1)$ is a constant.
Our proofs, which are delayed until Section 
\ref{sec_15},
make use of the following assumptions, the first two of which, were made in \citep{wang2017stochastic}.


\begin{assumption}
\label{assumption_6}
$f: \mathbb{R}^{n} \rightarrow \mathbb{R}$ is continuously differentiable.
{$f(\theta) \geq f^{low}> -\infty$}, 
for any
{$\theta \in \mathbb{R}^{n}$}. $\mathbf{\nabla f}$ is globally $L$-Lipschitz continuous;
namely for any $x, y \in \mathbb{R}^{n}$,
$
\| \mathbf{\nabla f}(x)- \mathbf{\nabla f}(y)\| \leq L\|x-y\|.
$
\end{assumption}


\begin{assumption}
\label{assumption_7}
For any iteration $k$, the stochastic gradient
{$\widehat{\mathbf{\nabla f}}_k = \widehat{\mathbf{\nabla f}}(\theta_k, \xi_k)$} satisfies:\\
a)
{$\mathbb{E}_{\xi_{k}} \left[ \widehat{\mathbf{\nabla f}}(\theta_{k}, \xi_{k}) \right] = \mathbf{\nabla f}(\theta_{k})$},
b)
{$\mathbb{E}_{\xi_{k}} \left[ \left\| \widehat{\mathbf{\nabla f}} (\theta_{k}, \xi_{k}) - \mathbf{\nabla f}(\theta_{k}) \right\|^{2} \right] \leq \sigma^{2}$},
where $\sigma>0$,
and $\xi_{k}, k=1,2, \ldots$ are independent samples
that are independent of
{$\left\{ \theta_{j} \right\}_{j=1}^{k}$}.
\end{assumption}


\begin{assumption}
\label{assumption_5}
The activation functions $\phi_l$ have bounded values: $\exists \varphi > 0$ s.t. $\forall l, \forall h, |\phi_l (h)| \leq \varphi$.
\end{assumption}

To use the convergence analysis in \citep{wang2017stochastic}, we need to show that the block-diagonal approximation of the inverse Hessian used in Algorithm {\ref{klbfgs_2}} satisfies the assumption that it is bounded above and below by positive-definite matrices. Given the Kronecker structure of our Hessian inverse approximation, it suffices to prove
{boundness} of both $H_a^l(k)$ and $H_g^l(k)$ for all iterations $k$. Making the additional assumption AS.\ref{assumption_5}, we are able to prove
Lemma \ref{Ha_bounds}, and hence Lemma \ref{lemma_2}, below.
Note that many popular activation functions satisfy AS.\ref{assumption_5}, such as sigmoid and tanh.

\begin{lemma}
\label{Ha_bounds}
Suppose that AS.\ref{assumption_5} holds. There exist two positive constants $\underline{\kappa}_a$, $\bar{\kappa}_a$ such that
$
    \underline{\kappa}_a I  \preceq  H_a^l(k)  \preceq  \bar{\kappa}_a I 
    ,\forall k, l.
$

\end{lemma}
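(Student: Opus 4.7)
My plan is to exploit the fact that the version of K-BFGS(L) analyzed here (Algorithm \ref{klbfgs_2}) uses \emph{exact inversion} on the LM-damped matrix for the $A_l$-factor, so that $H_a^l(k) = (A_l(k)+\lambda_A I)^{-1}$. This reduces bounding $H_a^l$ to controlling the extreme eigenvalues of $A_l(k)+\lambda_A I$, and the lower end is handed to us for free by the damping term $\lambda_A I$.

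First I would establish a uniform upper bound on $\|A_l(k)\|$. Recall that $A_l$ is built from the (possibly momentum-averaged) second moment of the homogeneous-extended activations $\bar{\va}_{l-1}(i) = (\va_{l-1}(i),1)$. By AS.\ref{assumption_5}, each component of $\va_{l-1}(i) = \phi_{l-1}(\vh_{l-1}(i))$ is bounded by $\varphi$ for every $l \geq 2$, independently of $\vh_{l-1}$ and hence independently of the current weights $\theta_k$; for $l=1$ one invokes the standing bound on the input data $x$. Thus there is a constant $C$, depending only on $\varphi$, the input bound, and the layer widths, such that $\|\bar{\va}_{l-1}(i)\|^2 \leq C$ uniformly in $i$, $k$, and $l$. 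Since $A_l(k) = \mathbb{E}_i[\bar{\va}_{l-1}(i)\bar{\va}_{l-1}(i)^\top]$ (the convex combination from momentum is a convex combination of such expectations, hence preserves the bound), we obtain
\begin{equation*}
0 \preceq A_l(k) \preceq C\, I \qquad \text{for all } k,l.
\end{equation*}

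Adding the LM term then sandwiches the damped matrix between $\lambda_A I$ and $(\lambda_A + C)I$, and inverting flips the bounds:
\begin{equation*}
\frac{1}{\lambda_A + C}\, I \;\preceq\; (A_l(k)+\lambda_A I)^{-1} \;=\; H_a^l(k) \;\preceq\; \frac{1}{\lambda_A}\, I.
\end{equation*}
Setting $\underline{\kappa}_a := 1/(\lambda_A + C)$ and $\bar{\kappa}_a := 1/\lambda_A$ completes the proof.

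The only nontrivial content is the propagation of boundedness through the network, which is precisely why AS.\ref{assumption_5} is invoked: without a bounded activation (e.g.\ for ReLU), $\|\va_{l-1}(i)\|$ could grow with the current weights $\theta_k$, and $\|A_l(k)\|$ would not admit a bound uniform in $k$. With bounded activations, boundedness is a one-line consequence of $|\phi_l|\leq\varphi$ and requires no inductive control of the weights. I expect the real work of the convergence proof to be in Lemma \ref{lemma_2} for $H_g^l$, where boundedness must be argued despite the BFGS/L-BFGS update, which is why the authors rely on the double-damping bounds (\ref{eq_21})--(\ref{eq_22}) rather than on a closed-form inverse.
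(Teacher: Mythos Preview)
Your proposal is correct and follows essentially the same approach as the paper: in Algorithm~\ref{klbfgs_2} one has $H_a^l(k)=(A_l(k)+\lambda_A I)^{-1}$, the damping term gives $\bar\kappa_a=1/\lambda_A$, and AS.\ref{assumption_5} bounds $\|\bar{\va}_{l-1}(i)\|^2$ (the paper writes the explicit constant $1+\varphi^2 d_l$ for your $C$), which propagates through the momentum average by convexity/induction to give $\underline\kappa_a=1/(\lambda_A+C)$. You are in fact slightly more careful than the paper in flagging the $l=1$ input case; the paper's proof silently subsumes it under the same bound.
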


\begin{lemma}
\label{lemma_3}

There exist two positive constants $\underline{\kappa}_g$ and $\bar{\kappa}_g$, such that
    $
    \underline{\kappa}_g I  \preceq  H_g^l(k)  \preceq  \bar{\kappa}_g I 
    ,\forall k, l.
    $
\end{lemma}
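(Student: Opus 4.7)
The plan is to show that the limited-memory approximation $H_g^l(k)$ produced by K-BFGS(L) with Double Damping (DD) plus skipping stays sandwiched between two positive multiples of the identity uniformly in $k$ and $l$. The argument rests on three ingredients: (i) the DD procedure with skipping controls each individual $(\vs,\vy)$ pair used in the update; (ii) the inequalities (\ref{eq_21}) and (\ref{eq_22}) already derived in Section \ref{sec_1} translate these pair-wise controls into a recursion for a single BFGS step applied to $B$ and to $H$; (iii) the limited-memory structure caps the number of BFGS updates per reconstruction at a fixed $p$, so iterating these recursions $p$ times from a bounded restart matrix yields the desired constants.

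First, I would verify that after Algorithm \ref{algo_2} the damped pair $(\tilde{\vs},\tilde{\vy})$ satisfies $\tilde{\vs}^\top\tilde{\vy} \geq \mu_2\,\tilde{\vs}^\top\tilde{\vs}$ directly from lines \ref{line_9}--\ref{line_10}, and that under the skipping variant analysed here the additional requirement $\tilde{\vs}^\top\tilde{\vy} \geq \mu_1\,\tilde{\vy}^\top H\tilde{\vy}$ is enforced by discarding pairs that violate it after step 5. Consequently every pair effectively used satisfies $\frac{\tilde{\vy}^\top H\tilde{\vy}}{\tilde{\vs}^\top\tilde{\vy}} \leq \frac{1}{\mu_1}$ and $\frac{\tilde{\vs}^\top\tilde{\vs}}{\tilde{\vs}^\top\tilde{\vy}} \leq \frac{1}{\mu_2}$, which are exactly the hypotheses required to invoke (\ref{eq_21}) and (\ref{eq_22}).

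Second, I would exploit the L-BFGS structure: at every iteration $k$, $H_g^l(k)$ is obtained by applying at most $p$ BFGS updates to a restart matrix $H_0(k)$ using the stored damped pairs, and symmetrically $(H_g^l(k))^{-1}$ is obtained from $p$ updates starting from $B_0(k)=H_0(k)^{-1}$. Applying (\ref{eq_22}) $p$ times to the sequence $H_0(k),H_1(k),\ldots,H_p(k)=H_g^l(k)$ gives an explicit upper bound $\bar{\kappa}_g$ depending only on $p$, $\mu_1$, $\mu_2$ and an upper bound on $\|H_0(k)\|$. Likewise, applying (\ref{eq_21}) $p$ times gives $\|B_p(k)\| \leq \|B_0(k)\|(1+1/\mu_1)^p$, which after inversion yields the lower bound $H_g^l(k) \succeq \underline{\kappa}_g I$.

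The main obstacle is showing that the L-BFGS restart matrix $H_0(k)$ is itself uniformly bounded above and below in the positive-definite order. If the implementation simply takes $H_0(k)=I$, this is immediate. If it uses the customary scaled identity $H_0(k) = \bigl(\vs_k^\top\vy_k/\vy_k^\top\vy_k\bigr)I$, one needs two-sided control of this ratio. For the upper side, the damping inequality $\vs^\top\vy \geq \mu_2\,\vs^\top\vs$ combined with Cauchy--Schwarz gives $\vs^\top\vy/\vy^\top\vy \leq \|\vs\|^2/(\mu_2\|\vs\|\cdot\|\vy\|)\cdot(\|\vs\|/\|\vy\|)^{-1}$-type bounds once $\|\vs\|,\|\vy\|$ are bounded; boundedness of $\|\vy_g^l\|$ follows from the Lipschitz-gradient assumption AS.\ref{assumption_6} applied to the chain-rule expression for $\vg_l$ together with the bounded-activation assumption AS.\ref{assumption_5}, while $\|\vs_g^l\|$ is controlled through the update rule $\vs_g^l = W_l^+\bar{\va}_{l-1}^+ - W_l\bar{\va}_{l-1}$ and AS.\ref{assumption_5}. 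For the lower side, the same chain, together with the previously established Lemma \ref{Ha_bounds} bounding $H_a^l$, keeps $\vs^\top\vy/\vy^\top\vy$ above a positive constant. Once the restart matrix is pinned down, the $p$-step recursion from the second paragraph closes the proof.
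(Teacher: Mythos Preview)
Your proposal has a genuine gap. The inequality $\frac{\tilde{\vy}^\top H\tilde{\vy}}{\tilde{\vs}^\top\tilde{\vy}}\le \frac{1}{\mu_1}$ that Double Damping with skipping guarantees involves the matrix $H=H_g^l$ that was current \emph{at the iteration where the pair $(\tilde{\vs},\tilde{\vy})$ was generated}. In the paper's notation this matrix is $\bar H^{(i)}$. But when L-BFGS reconstructs $H_g^l(k)$ from $H_0=I$ at iteration $k$, the intermediate matrices $H_0,H_1,\dots,H_{p-1}$ in the recursion are \emph{different} from the $\bar H^{(i)}$'s. Inequalities~(\ref{eq_21}) and~(\ref{eq_22}) require the ratio $\frac{\vy^\top H\vy}{\vs^\top\vy}$ with the $H$ being updated, i.e.\ $H_{i-1}$, not $\bar H^{(i)}$. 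The paper states this explicitly: ``Note that we don't have $\frac{\vy_i^\top H_{i-1}\vy_i}{\vs_i^\top\vy_i}\le\frac{1}{\mu_1}$, so we cannot directly apply~(\ref{eq_22}).'' Your plan of simply iterating~(\ref{eq_22}) $p$ times for the upper bound therefore does not go through.

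The paper's remedy is a two-stage argument you are missing. First it establishes the lower bound $H_g^l(k)\succeq\underline{\kappa}_g I$, with $\underline{\kappa}_g=(1+1/\mu_1)^{-p}$. Because this lower bound is uniform in $k$ and $l$, it applies in particular to every $\bar H^{(i)}$, so the skipping inequality $\vs_i^\top\vy_i\ge\mu_1\,\vy_i^\top\bar H^{(i)}\vy_i\ge\mu_1\underline{\kappa}_g\|\vy_i\|^2$ yields the $H$-\emph{independent} bound $\frac{\|\vy_i\|^2}{\vs_i^\top\vy_i}\le\frac{1}{\mu_1\underline{\kappa}_g}$. With this and $\frac{\|\vs_i\|^2}{\vs_i^\top\vy_i}\le\frac{1}{\mu_2}$ the paper does a direct term-by-term expansion of the BFGS $H$-update (rather than invoking~(\ref{eq_22})) to obtain a recursion $\|H_i\|\le\hat\mu\,\|H_{i-1}\|+\frac{1}{\mu_2}$ with $\hat\mu=\bigl(1+\frac{1}{\sqrt{\mu_1\mu_2\underline{\kappa}_g}}\bigr)^2$, and then iterates $p$ times from $H_0=I$ to get $\bar\kappa_g$. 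The order matters: the lower bound must come first so that it can be fed back into the upper-bound argument.

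A minor point: in Algorithm~\ref{klbfgs_2} (the variant being analysed) the L-BFGS restart is always $H_0=I$, so your discussion of a scaled-identity restart and the attendant attempt to bound $\vs^\top\vy/\vy^\top\vy$ via AS.\ref{assumption_6} and AS.\ref{assumption_5} is unnecessary here.
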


\begin{lemma}
\label{lemma_2}
Suppose that AS.\ref{assumption_5} holds. 
Let
{$\theta_{k+1} = \theta_k - \alpha_k H_k \widehat{\mathbf{\nabla f}}_k$} be the step taken in Algorithm \ref{klbfgs_2}. There exists two positive constants $\underline{\kappa}$, $\bar{\kappa}$ such that $\underline{\kappa} I \preceq H_k \preceq \bar{\kappa} I, \forall k$. 
\end{lemma}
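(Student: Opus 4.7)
The plan is to reduce the bound on $H_k$ to the per-layer bounds already supplied by Lemma \ref{Ha_bounds} and Lemma \ref{lemma_3}, by exploiting the two pieces of structure that Algorithm \ref{klbfgs_2} imposes on $H_k$: it is block-diagonal across layers, and within each layer block it is a Kronecker product of the two layer factors.

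First I would observe that, by construction in Algorithm \ref{klbfgs} (and hence in Algorithm \ref{klbfgs_2}), the inverse-Hessian approximation $H_k$ used at iteration $k$ has the block-diagonal form
\begin{equation*}
    H_k = \operatorname{diag}\bigl(H^1(k),\, H^2(k),\, \ldots,\, H^L(k)\bigr),
    \qquad
    H^l(k) = H_a^l(k) \otimes H_g^l(k),
\end{equation*}
so that its spectrum is the union of the spectra of the individual blocks $H^l(k)$. It therefore suffices to exhibit positive constants $\underline{\kappa}, \bar{\kappa}$ (independent of $k$ and $l$) such that $\underline{\kappa} I \preceq H^l(k) \preceq \bar{\kappa} I$ for every layer $l$ and iteration $k$.

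Next I would invoke the standard fact that for symmetric positive-definite matrices $P$ and $Q$ the eigenvalues of $P\otimes Q$ are exactly the pairwise products $\lambda_i(P)\lambda_j(Q)$, and consequently
\begin{equation*}
    \lambda_{\min}(P)\lambda_{\min}(Q)\, I \;\preceq\; P\otimes Q \;\preceq\; \lambda_{\max}(P)\lambda_{\max}(Q)\, I.
\end{equation*}
Applied to $P = H_a^l(k)$ and $Q = H_g^l(k)$, together with Lemma \ref{Ha_bounds} (which gives $\underline{\kappa}_a I \preceq H_a^l(k) \preceq \bar{\kappa}_a I$ under AS.\ref{assumption_5}) and Lemma \ref{lemma_3} (which gives $\underline{\kappa}_g I \preceq H_g^l(k) \preceq \bar{\kappa}_g I$ unconditionally), this yields
\begin{equation*}
    \underline{\kappa}_a\,\underline{\kappa}_g\, I \;\preceq\; H^l(k) \;\preceq\; \bar{\kappa}_a\,\bar{\kappa}_g\, I
    \qquad \text{for all } k, l.
\end{equation*}

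Setting $\underline{\kappa} := \underline{\kappa}_a\underline{\kappa}_g$ and $\bar{\kappa} := \bar{\kappa}_a\bar{\kappa}_g$, and stacking these identical bounds across the layer blocks via the block-diagonal structure of $H_k$, gives the desired $\underline{\kappa} I \preceq H_k \preceq \bar{\kappa} I$. There is essentially no hard step here, since all the heavy lifting is done by Lemmas \ref{Ha_bounds} and \ref{lemma_3}; the only item worth double-checking is that the Kronecker and block-diagonal structure used in the algorithm's update (equation (\ref{eq_1})) really persists through every iteration, so that the spectrum of $H_k$ decomposes as claimed — which follows directly from the fact that $H_a^l$ and $H_g^l$ are updated independently in lines 6 and 7 of Algorithm \ref{klbfgs}.
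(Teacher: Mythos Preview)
Your proposal is correct and is exactly the approach taken in the paper: the paper's proof simply cites Lemmas \ref{Ha_bounds} and \ref{lemma_3} together with the block-diagonal Kronecker structure $H_k = \operatorname{diag}\{H_a^1(k-1)\otimes H_g^1(k-1),\ldots,H_a^L(k-1)\otimes H_g^L(k-1)\}$, which is precisely what you spell out (modulo the harmless index shift $k$ versus $k-1$).
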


Using Lemma \ref{lemma_2}, we can now apply 
Theorem 2.8 in \citep{wang2017stochastic} to prove the convergence of Algorithm \ref{klbfgs_2}:




\begin{theorem}
\label{thm_3}

Suppose that assumptions
AS.1-3
hold for
$\left\{ \theta_{k} \right\}$ generated by 
Algorithm \ref{klbfgs_2}
with mini-batch size $m_{k}=$ m for all $k $,
and $\alpha_{k}$ is chosen as
$\alpha_{k} = \frac{\underline{\kappa}}{L \bar{\kappa}^{2}} k^{-\beta}$,
with $\beta \in(0.5,1)$.
Then
$$\frac{1}{N} \sum_{k=1}^{N} \mathbb{E} \left[ \left\| \mathbf{\nabla f}(\theta_{k}) \right\|^{2} \right] \leq \frac{2 L\left(M_{f}-f^{l o w}\right) \bar{\kappa}^{2}}{\underline{\kappa}^{2}} N^{\beta-1}+\frac{\sigma^{2}}{(1-\beta) m}\left(N^{-\beta}-N^{-1}\right)$$
where $N$ denotes the iteration number {and $M_f > 0$ depends only on $f$}. Moreover, for a given $\epsilon \in(0,1),$ to guarantee that
$\frac{1}{N} \sum_{k=1}^{N} \mathbb{E}\left[ \left\| \mathbf{\nabla f}(\theta_{k}) \right\|^{2}\right]<\epsilon$, the number of iterations $N$ needed is at most
$O\left(\epsilon^{-\frac{1}{1-\beta}}\right)$.
\end{theorem}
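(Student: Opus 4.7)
The plan is to invoke Theorem 2.8 of Wang et al. (2017) as a black box once all of its hypotheses have been verified for Algorithm \ref{klbfgs_2}. That theorem requires (i) $L$-smoothness and lower-boundedness of $f$, (ii) unbiasedness and bounded variance of the stochastic gradient with i.i.d.\ samples, and (iii) uniform two-sided spectral bounds on the inverse Hessian approximation $H_k$. Items (i) and (ii) are exactly AS.\ref{assumption_6} and AS.\ref{assumption_7}, so the only substantive work is to supply (iii), which is precisely Lemma \ref{lemma_2}.

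To obtain Lemma \ref{lemma_2} I would use the block-diagonal plus Kronecker-factored structure of $H_k$: its $l$-th diagonal block is $H_a^l(k) \otimes H_g^l(k)$, and the spectrum of a Kronecker product is the pairwise product of the factor spectra. Hence the uniform two-sided bounds on $H_a^l(k)$ from Lemma \ref{Ha_bounds} and on $H_g^l(k)$ from Lemma \ref{lemma_3} combine to give $\underline{\kappa} I \preceq H_k \preceq \bar{\kappa} I$ for all $k$, with $\underline{\kappa} = \underline{\kappa}_a \underline{\kappa}_g$ and $\bar{\kappa} = \bar{\kappa}_a \bar{\kappa}_g$. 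So Lemma \ref{lemma_2} is just the product of the two factor-wise lemmas.

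With Lemma \ref{lemma_2} in hand, substituting $\underline{\kappa}, \bar{\kappa}, L, \sigma^2, m$ into Theorem 2.8 of Wang et al.\ (2017) with step size $\alpha_k = \frac{\underline{\kappa}}{L \bar{\kappa}^2} k^{-\beta}$ directly yields the displayed inequality for $\frac{1}{N}\sum_{k=1}^N \E[\|\mathbf{\nabla f}(\theta_k)\|^2]$. The first term on the right-hand side is $O(N^{\beta-1})$ and the second is $O(N^{-\beta})$; since $\beta \in (0.5, 1)$, the dominant term is $O(N^{\beta-1})$, so forcing it to be at most $\epsilon$ gives $N = O(\epsilon^{-1/(1-\beta)})$.

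The main obstacle is not the theorem itself but Lemma \ref{lemma_3}. The bounds (\ref{eq_21})--(\ref{eq_22}) only control one update relative to the previous $H$, so iterated naively the spectrum of $H_g^l(k)$ could blow up like $(1+1/\mu_1)^k$. This is exactly why Algorithm \ref{klbfgs_2} uses the \emph{skipping} variant of DD: whenever the DD-modified pair still fails $\vy^\top H \vy \leq \vs^\top \vy / \mu_1$, the update is skipped, so $H_g^l(k)$ stays inside a fixed interval determined by $H_g^l(0)$, $\mu_1$, and $\mu_2$. For Lemma \ref{Ha_bounds}, AS.\ref{assumption_5} bounds $\|\va_{l-1}(i)\|$ uniformly, so $\mathbb{E}_i[\va_{l-1}(i)\va_{l-1}(i)^\top]$ has a uniform operator-norm bound; combined with the LM damping $A_l^{\mathrm{LM}} = A_l + \lambda_A I$, which provides a uniform positive lower curvature bound, the exact-inversion variant used in Algorithm \ref{klbfgs_2} yields the claimed bounds on $H_a^l(k)$ without any compounding.
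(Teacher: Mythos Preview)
Your high-level plan matches the paper exactly: verify the hypotheses of Theorem 2.8 in \cite{wang2017stochastic} and read off the conclusion, with Lemma \ref{lemma_2} supplying the spectral bounds on $H_k$ via the Kronecker structure and Lemmas \ref{Ha_bounds} and \ref{lemma_3}. Two points deserve tightening.

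First, AS.\ref{assumption_8} (the hypothesis of Theorem 2.8) has two parts: the spectral bounds \emph{and} the requirement that $H_k$ depend only on $\xi_{[k-1]}$. You address only the first. The paper checks the second explicitly by noting that in Algorithm \ref{klbfgs_2} the step uses $H_a^l(k-1)$ and $H_g^l(k-1)$, which are built from samples drawn before the $k$th iteration. Without this measurability, the unbiasedness in AS.\ref{assumption_7} cannot be conditioned through $H_k$, and Theorem 2.8 does not apply.

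Second, your account of why Lemma \ref{lemma_3} holds is off. Skipping alone would \emph{not} keep $H_g^l(k)$ in a fixed interval: each accepted update can still multiply $\|B\|$ by $(1+1/\mu_1)$ via (\ref{eq_21}), so full BFGS with skipping could blow up over $k$. What saves Lemma \ref{lemma_3} is that Algorithm \ref{klbfgs_2} uses \emph{L-BFGS}: at every iteration $H_g^l$ is rebuilt from $H_0=I$ using at most $p$ stored pairs, so the compounding is over $p$, not $k$, yielding $\|B\|\le(1+1/\mu_1)^p$. Skipping is still needed, but its role is to guarantee each stored pair satisfies the ratio conditions so that (\ref{eq_21}) applies at every step of the $p$-step recursion; it is the limited memory, not the skipping, that caps the exponent. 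Your statement that the interval is ``determined by $H_g^l(0)$, $\mu_1$, and $\mu_2$'' omits the crucial dependence on $p$.
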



{
Note: other theorems in \citep{wang2017stochastic}, namely Theorems 2.5 and 2.6, also apply here under our assumptions. 
}



\section{Experiments}
\label{sec_17}

Before we present some
{experimental} results, we address the use of moving averages, and the computational and storage requirements of the algorithms that we tested.


\textbf{Mini-batch and Moving Average. }
Clearly, using the whole dataset at each iteration is inefficient; hence, we use a mini-batch to estimate desired quantities. We use $\overline{X}$ to denote the averaged value of $X$ across the mini-batch for any quantity $X$. To incorporate information from the past as well as reducing the variability, we use an exponentially decaying moving average to estimate desired quantities
with decay parameter $\beta \in (0, 1)$:
\begin{enumerate}[topsep=0pt,itemsep=-1ex,partopsep=1ex,parsep=1ex,wide]
    \item To estimate 
    the gradient
    $\mathbb{E}_i [\mathbf{\nabla f}(i)]$,
    at each iteration, we update
    $\widehat{\mathbf{\nabla f}} = {\beta} \cdot \widehat{\mathbf{\nabla f}} + {(1-\beta)} \cdot \overline{\mathbf{\nabla f}}$.

    \item $H_a^l$: To estimate $A_l$, at each iteration we update
$
    \widehat{A_l} = {\beta} \cdot \widehat{A_l} + {(1-\beta)} \cdot \overline{\va_{l-1}\va_{l-1}^\top}.
    $
    {Note that although we compute $\vs_a^l$ as $H_a^l \cdot \overline{\va_{l-1}}$, we update $\widehat{A_l}$ with $\overline{\va_{l-1}\va_{l-1}^\top}$ (i.e. the average $\va_{l-1}(i) \va_{l-1}(i)^\top$ over the minibatch, not $ \overline{\va_{l-1}} \cdot ( \overline{ \va_{l-1}})^\top$).
    
    }
    

    \item $H^l_g$: BFGS "uses" momentum implicitly incorporated in the matrices $H^l_g$. To further stabilize the BFGS update, we also use a moving-averaged  $(\vs_g^l, \vy_g^l)$ (before damping); i.e., We update 
    $\vs_g^l = {\beta} \cdot \vs_g^l + {(1-\beta)} \cdot \left( \overline{\vh_l^+} - \overline{\vh_l} \right)$, and
    $\vy_g^l = {\beta} \cdot \vy_g^l + {(1-\beta)} \cdot \left( \overline{\vg_l^+} - \overline{\vg_l} \right)$.

\end{enumerate}

Finally, when computing
$\overline{\vh_l^+}$ and $\overline{\vg_l^+}$, we use the same mini-batch as was used to compute
$\overline{\vh_l}$ and $\overline{\vg_l}$.
This doubles the number of forward-backward passes at each iteration.

\textbf{Storage and Computational Complexity. }
Tables \ref{table_3} and \ref{table_4} compare the storage and computational requirements, respectively, for a layer with $d_i$ inputs and $d_o$ outputs for K-BFGS, K-BFGS(L), KFAC, and Adam/RMSprop. 
We denote the size of mini-batch by $m$, the number of $(\vs, \vy)$ pairs stored for L-BFGS by $p$, and the frequency of matrix inversion in KFAC by $T$. 
Besides the requirements listed in Table \ref{table_3}, all algorithms need storage for the parameters $W_l$ and the estimate of the gradient, $\widehat{\bf{\nabla f}_l}$, (i.e. $O(d_i d_o)$). Besides the work listed in Table \ref{table_4}, all algorithms also need to do a forward-backward pass to compute $\mathbf{\nabla f}_l$ as well as updating  $W_l$,  (i.e. $O(m d_i d_o)$). 
Also note that, even though we use big-$O$ notation in these tables, the constants for all of the terms in each of the rows are roughly at the same level and relatively small. 


In Table \ref{table_4}, for K-BFGS and K-BFGS(L), 
"Additional pass" refers to Line \ref{line_1} of Algorithm \ref{klbfgs};
under "Curvature", $O(m d_i^2)$ arises from "Hessian action" BFGS to update $H_a^l$ (see the algorithm at the end of Section \ref{sec_4}), $O(m d_o)$ arises from (\ref{eq_11}), $O(d_o^2)$ arises from updating $H_g^l$ (only for K-BFGS);
and "Step $\Delta W_l$" refers to (\ref{eq_1}). For KFAC, referring to Algorithm \ref{kfac} (in the appendix), "Additional pass" refers to Line \ref{line_2};
under "Curvature", $O(m d_i^2 + m d_o^2)$ refers to Line \ref{line_3}, and $O(\frac{1}{T} d_i^3 + \frac{1}{T} d_o^3)$ refers to Line \ref{line_4};
and "Step $\Delta W_l$" refers to Line \ref{line_5}.





From Table \ref{table_3}, we see that the Kronecker property enables K-BFGS and K-BFGS(L) (as well as KFAC) to have storage requirements comparable to those of first-order methods. Moreover, from Table \ref{table_4}, we see that K-BFGS and K-BFGS(L) require less computation per iteration than KFAC, since they only involve
{matrix} multiplications, whereas KFAC requires matrix inversions which depend cubically on both $d_i$ and $d_o$. The cost of matrix inversion in KFAC (and singular value decomposition in \cite{gupta2018shampoo}) is amortized by performing these operations only once every $T$ iterations; nonetheless, these amortized operations usually become much slower than
{matrix} multiplication as models scale up.


\begin{table}[ht]
  \caption{Storage}
  \label{table_3}
  \centering
  \begin{tabular}{l|cccllllll}
    \hline              
    Algorithm
    & $\nabla f_l \odot \nabla f_l$ & $A$ & $G$ & Total
    \\
    \hline
    K-BFGS
    & ---
    & $O(d_i^2)$
    & $O(d_o^2)$
    & $O(d_i^2 + d_o^2 + d_i d_o)$
    \\
    K-BFGS(L)
    & ---
    & $O(d_i^2)$
    & $O(p d_o)$
    & $O(d_i^2 + d_i d_o + p d_o)$
    \\
    KFAC
    &---
    & $O(d_i^2)$
    & $O(d_o^2)$
    & $O(d_i^2 + d_o^2 + d_i d_o)$
    \\
    Adam/RMSprop
    & $O(d_i d_o)$
    & ---
    & ---
    & $O(d_i d_o)$
    \\
    \hline
  \end{tabular}
\end{table}

\begin{table}[ht]
  \caption{Computation per iteration}
  \label{table_4}
  \centering
  \begin{tabular}{l|cccllllll}
    \hline
    Algorithm
    & Additional pass
    & Curvature
    & Step $\Delta W_l$
    \\
    \hline
    K-BFGS
    & $O(m d_i d_o)$
    & $O(m d_i^2 + m d_o + d_o^2)$
    & $O(d_i^2 d_o + d_o^2 d_i)$
    \\
    K-BFGS(L)
    & $O(m d_i d_o)$
    & $O(m d_i^2 + m d_o)$
    & $O(d_i^2 d_o + p d_i d_o)$
    \\
    KFAC
    & $O(m d_i d_o)$
    & $O(m d_i^2 + m d_o^2 + \frac{1}{T} d_i^3 + \frac{1}{T} d_o^3)$
    & $O(d_i^2 d_o + d_o^2 d_i)$
    \\
    Adam/RMSprop
    & ---
    & $O(d_i d_o)$
    & $O(d_i d_o)$
    \\
    \hline
  \end{tabular}
\end{table}

\textbf{Experimental Results. }
We tested K-BFGS and K-BFGS(L), as well as KFAC, Adam/RMSprop and SGD-m (SGD with momentum) on three autoencoder problems, namely, MNIST \cite{lecun1998gradient}, FACES, and CURVES, which are used in e.g. \cite{hinton2006reducing,martens2010deep,martens2015optimizing},
except that we replaced the sigmoid activation with ReLU. See Section \ref{sec_11} in the appendix for a complete description of these
problems and the competing algorithms. 


Since one can view
{Powell's damping with $B=I$} as LM damping, we write $\mu_2=\lambda_G$, where $\lambda_G$ denotes the LM damping parameter for $G_l$. We then define $\lambda = \lambda_A \lambda_G$ as the overall damping term of our QN approximation. 
To simplify matters, we chose $\lambda_A = \lambda_G = \sqrt{\lambda}$, so that we needed to tune only one hyper-parameter (HP) $\lambda$.

\begin{figure}[H]
  \centering
  \begin{subfigure}[b]{0.49\textwidth}
    \includegraphics[width=\textwidth, height=7cm]{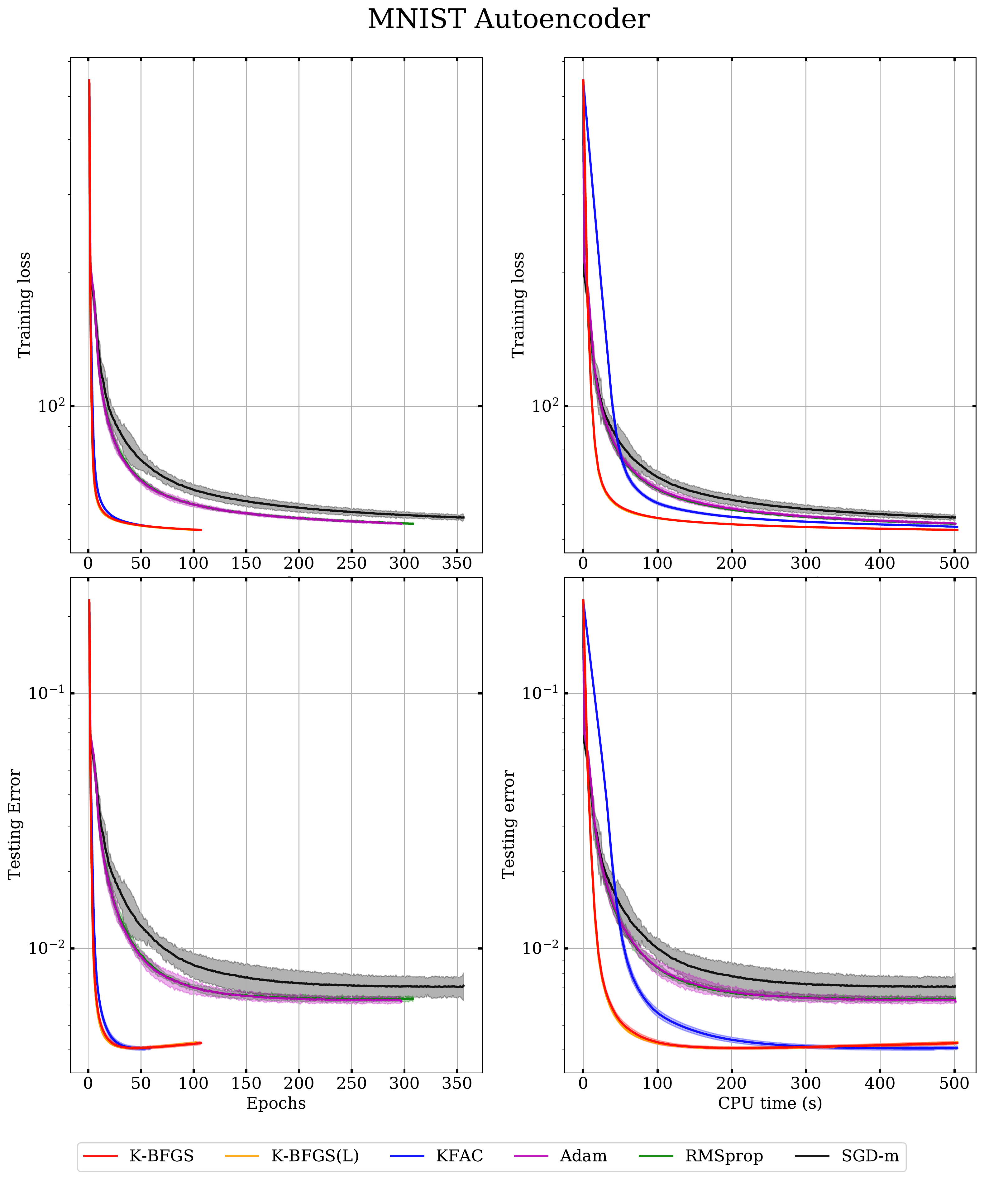}
    \caption{MNIST}
  \end{subfigure}
  \begin{subfigure}[b]{0.49\textwidth}
    \includegraphics[width=\textwidth, height=7cm]{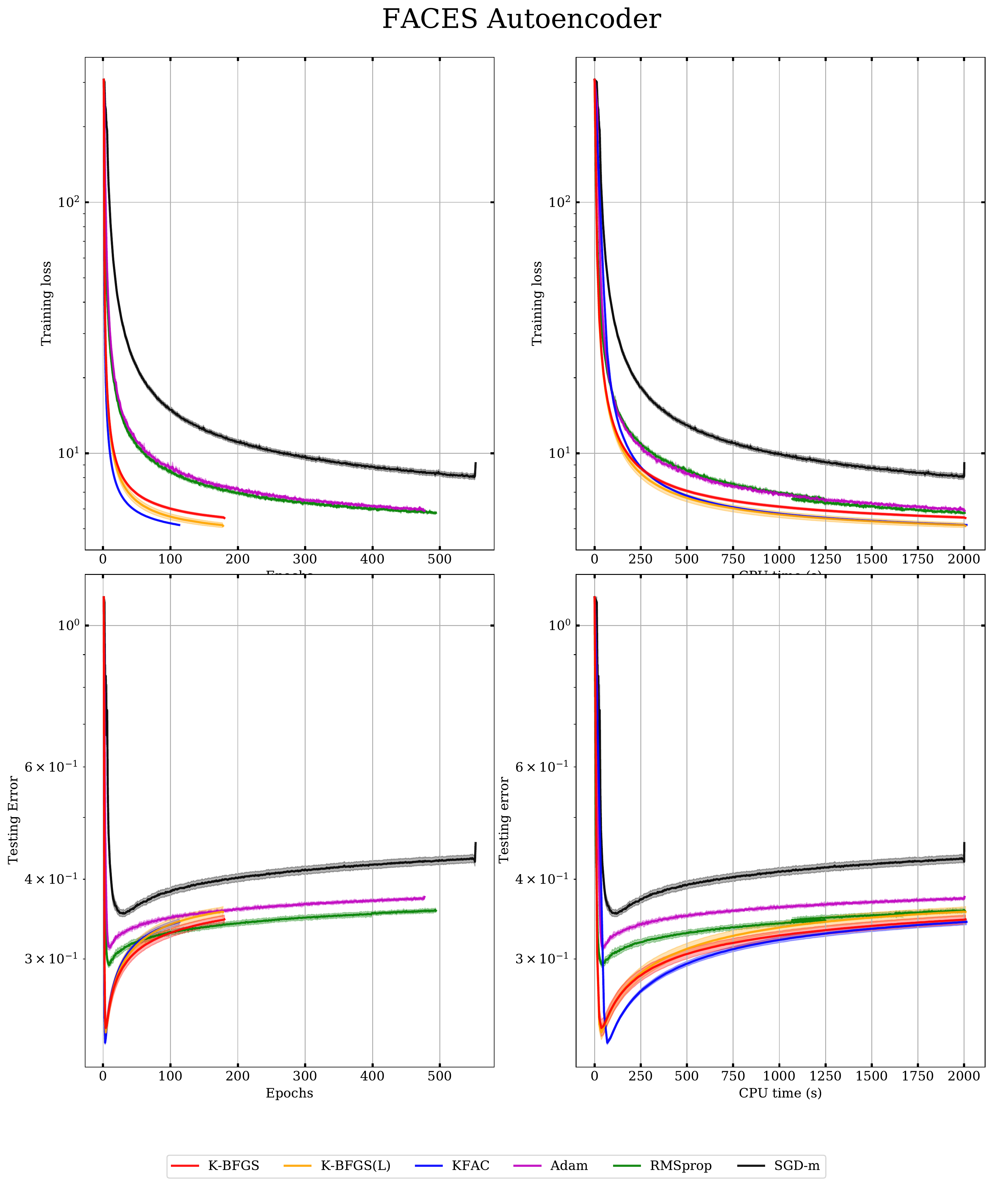}
    \caption{FACES}
  \end{subfigure}
  \begin{subfigure}[b]{0.49\textwidth}
    \includegraphics[width=\textwidth, height=7cm]{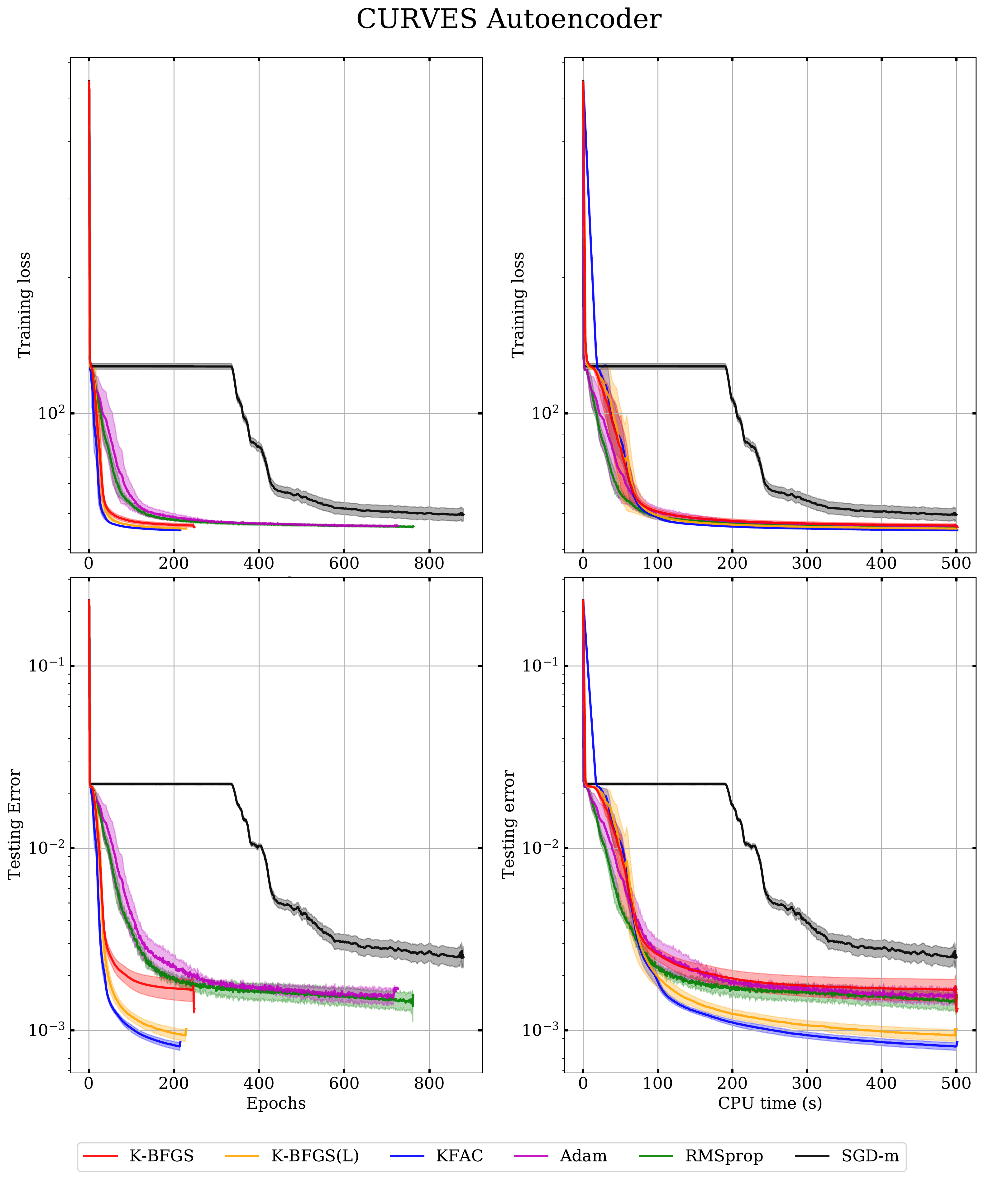}
    \caption{CURVES}
  \end{subfigure}

  \caption{
  Comparison between algorithms on (a) MNIST, (b) FACES, (c) CURVES. For each problem, the upper (lower) rower depicts training loss (testing (mean square) error), whereas
  the left (right) column depicts training/test progress versus epoch (CPU time), respectively.
After each epoch, the training loss/testing error from the whole training/testing set is reported
(the time for computing this loss is not included in the plots).
For each problem, algorithms are terminated after the same amount of CPU time.
  }
  \label{fig_5}
\end{figure}

To obtain the results in Figure \ref{fig_5}, we first did a grid-search on (learning rate, damping) pairs for all algorithms (except for SGD-m, whose grid-search was only on learning rate), where damping refers to 
$\lambda$ for K-BFGS/K-BFGS(L)/KFAC, and $\epsilon$ for RMSprop/Adam. We then selected the best (learning rate, damping) pairs with the lowest training loss encountered.
The range for the grid-search and the best 
HP values (as well as other fixed 
HP values) are listed in Section \ref{sec_11} in the appendix. Using the best HP values that we found, we then made 20 runs employing different random seeds, and plotted the mean value of the 20 runs as the solid line and the standard deviation as the shaded area.\footnote{
Results were obtained on a machine with 8 x Intel(R) Xeon(R) CPU @ 2.30GHz and 1 x NVIDIA Tesla P100.
Code is available at \url{https://github.com/renyiryry/kbfgs_neurips2020_public}.
}

{
From the training loss plots in Figure \ref{fig_5}, our algorithms clearly outperformed the first-order methods, except for RMSprop/Adam on CURVES, with respect to CPU time, and performed comparably to KFAC in terms of both CPU time and number of epochs taken. The testing error plots in  Figure \ref{fig_5}  show that our K-BFGS(L) method and KFAC behave very similarly and substantially outperform all of the first-order methods in terms of both of these measures. This suggests that our algorithms not only optimize well, but also generalize well.
}

To further demonstrate the robustness of our algorithms, we examined the loss under various HP settings,  
which showed that our algorithms are stable under a fairly wide range for the HPs (see Section \ref{sec_16} in the appendix). 

We also repeated our experiments using mini-batches of size 100 for all algorithms (see Figures \ref{fig_2}, \ref{fig_3}, and \ref{fig_4} in the appendix, where HPs are optimally tuned for batch sizes of 100) and our proposed methods continue to demonstrate advantageous performance, both in training and testing. For these experiments, we did not experiment with 20 random seeds.
These results show that our approach works as well for relatively small mini-batch sizes of 100, as those of size 1000, which are less noisy, and hence is robust in the stochastic setting employed to train DNNs.

Compared with other methods mentioned in this paper, our K-BFGS and K-BFGS(L) have the extra advantage of being able to double the size of minibatch for computing the stochastic gradient with almost no extra cost, which might be of particular interest in a highly stochastic setting. See Section \ref{sec_2} in the appendix for more discussion on this and some preliminary experimental results.


\section{Conclusion}

We proposed Kronecker-factored QN methods, namely, K-BFGS and K-BFGS(L), for training multi-layer feed-forward neural network models, that use layer-wise second-order information and require modest memory and computational resources. Experimental results indicate that our methods outperform or perform comparably to the state-of-the-art first-order and second-order methods. Our methods can also be extended to convolutional and recurrent NNs.



\clearpage

\small

\section*{Broader Impact}



{The research presented in this paper provides a new method for training DNNs that our experimental testing has shown to be more efficient in several cases than current state-of-the-art optimization methods for this task.  Consequently, because of the wide use of DNNs in machine learning, this should help save a substantial amount of energy.
Our new algorithms simply attempt to minimize the loss function that are given to it and the computations that it performs are all transparent. In machine learning, DNNs can be trained to address many types of problems, some of which should lead to positive societal outcomes, such as ones in medicine (e.g., diagnostics and drug effectiveness), autonomous vehicle development, voice recognition and climate change. Of course, optimization algorithms for training DNNs can be used to train models that may have negative consequences, such as those intended to develop psychological profiles,
invade privacy and justify biases.  The misuse of any efficient optimization algorithm for machine learning, and in particular our algorithms, is beyond the control of the work presented here.

}

\begin{ack}

DG and YR were supported in part by NSF Grant IIS-1838061. DG acknowledges the computational support 
provided by Google Cloud Platform Education Grant, Q81G-U4X3-5AG5-F5CG.

\end{ack}

\medskip

\bibliography{references.bib}

\appendix
\newpage

\section{Pseudocode for K-BFGS/K-BFGS(L)}


Algorithm \ref{algo_3} gives pseudocode for K-BFGS/K-BFGS(L), which is implemented in the experiments. For details see Sections \ref{sec_1}, \ref{sec_4}, and Section \ref{sec_5} in the Appendix. 

\begin{algorithm}
    \caption{
    Pseudocode for K-BFGS / K-BFGS(L)
    }
    \label{algo_3}
    \begin{algorithmic}[1]

    \REQUIRE
    Given initial weights $\theta= \left[\operatorname{vec}\left(W_{1}\right)^{\top},  \ldots, \operatorname{vec}\left(W_{L}\right)^{\top}\right]^{\top}$, batch size $m$, learning rate
    $\alpha$, {damping value $\lambda$}, and for K-BFGS(L), the number of $(\vs, \vy)$ pairs $p$ that are
    stored and used to compute $H_g^l$ at each iteration
    
    \STATE $\mu_1 = 0.2$, $\beta = 0.9$
    \COMMENT{set default hyper-parameter values}
    
    \STATE $\lambda_A = \lambda_G = \sqrt{\lambda}$
    \COMMENT{split the damping into $A$ and $G$}
    
    
    \STATE $\widehat{\mathbf{\nabla f}}_l = 0$,
    $A_l = \mathbb{E}_i \left[ \va_{l-1}(i) \va_{l-1}(i)^\top \right]$ {by forward pass}, $H_a^l = (A_l + \lambda_A I_A)^{-1}$, $H_g^l = I$ ($l = 1,...,L$)
    \COMMENT{Initialization}
    

    \FOR {$k=1,2,\ldots$}
        \STATE Sample mini-batch of size {$m$}: $M_k = \{\xi_{k,i}, i =1, \dots, m\}$
        
        \STATE Perform a forward-backward pass over the current mini-batch $M_k$ to compute $\overline{\mathbf{\nabla f}}_l$, $\va_l$, $\vh_l$, and $\vg_l$ ($l = 1, \ldots, L$) (see Algorithm \ref{feedforward})
        
        \FOR {$l=1, \ldots, L$}
            
            \STATE 
            $\widehat{\mathbf{\nabla f}}_l = \beta \widehat{\mathbf{\nabla f}}_l + (1-\beta) \overline{\mathbf{\nabla f}}_l$
            
            
            
            
            
            \STATE $p_l = H_g^l \widehat{\mathbf{\nabla f}}_l H_a^l$
            
            \STATE
        
            \COMMENT{In K-BFGS(L),
            when computing $ H^l_g \left(\widehat{\mathbf{\nabla f}}_l H^l_a  \right)$, L-BFGS is initialized with an identity matrix}

            \STATE $W_l = W_l - \alpha \cdot p_{l}$
        \ENDFOR
        
        \STATE Perform another forward-backward pass over $M_k$ to compute $\vh_l^+$, $\vg_l^+$ ($l = 1, \ldots, L$)
        
        \FOR {$l = 1,...,L$}
        
        \STATE 
        \COMMENT{Use damped BFGS or L-BFGS to update $H_g^l$ (see Section \ref{sec_1})}
        
        

        \STATE
        $\vs_g^l = {\beta} \cdot \vs_g^l + {(1-\beta)} \cdot \left( \overline{\vh_l^+} - \overline{\vh_l} \right)$,
        {$\vy_g^l = {\beta} \cdot \vy_g^l + {(1-\beta)} \cdot \left( \overline{\vg_l^+} - \overline{\vg_l} \right)$}

        \STATE $(\tilde{\vs}_g^l, {\tilde{\vy}_g^l}) = \text{DD}({\vs}_g^l, {{\vy}_g^l})$ with $H = H_g^l$, $\mu_1 = \mu_1$, $\mu_2 = \lambda_G$
        \COMMENT{See Algorithm \ref{algo_2}}  
        
        \STATE Use BFGS or L-BFGS with $(\tilde{\vs}_g^l, {\tilde{\vy}_g^l})$ to update $H_g^l$
        
        

    

    

    
        
        \STATE 
        \COMMENT{Use Hessian-action BFGS
        to update $H_a^l$ (see Section \ref{sec_4})}

        \STATE
        $A_l = \beta \cdot A_l + (1-\beta) \cdot \overline{\va_{l-1} \va_{l-1}^\top}$ 
        
        \STATE $A_l^{\text{LM}} = A_l + \lambda_A I_A$
    
        \STATE
        $\vs_a^l = H_a^l \cdot \overline{\va_{l-1}}$,
        $\vy_a^l = A_l^{\text{LM}} \vs_a^l$
        
        \STATE Use BFGS with $(\vs_a^l, \vy_a^l)$ to update $H_a^l$

        \ENDFOR

        \ENDFOR  
    
    \end{algorithmic}
\end{algorithm}

\section{Convergence: Proofs of Lemmas 1-3 and Theorem 2}
\label{sec_15}



\begin{algorithm}
    \caption{
    K-BFGS(L) with DD-skip and exact inversion of  $A_l^{\text{LM}}$
    }
    \label{klbfgs_2}
    \begin{algorithmic}[1]

    \REQUIRE
    Given initial weights $\theta= \left[\operatorname{vec}\left(W_{1}\right)^{\top},  \ldots, \operatorname{vec}\left(W_{L}\right)^{\top}\right]^{\top}$, batch size $m$, learning rate
    $\{ \alpha_k \}$, damping value $\lambda$, and the number of $(\vs, \vy)$ pairs $p$
    that are stored and used to compute $H_g^l$ at each iteration
    
    \STATE $\mu_1 = 0.2$, $\beta = 0.9$
    \COMMENT{set default hyper-parameter values}
    
    \STATE $\lambda_A = \lambda_G = \sqrt{\lambda}$
    \COMMENT{split the damping into $A$ and $G$}
    
    \STATE
    $A_l(0) = \mathbb{E}_i \left[ \va_{l-1}(i) \va_{l-1}(i)^\top \right]$ by forward pass, $H_a^l(0) = (A_l(0) + \lambda_A I_A)^{-1}$, $H_g^l(0) = I$ ($l = 1,...,L$)
    \COMMENT{Initialization}
    

    \FOR {$k=1,2,\ldots$}
        \STATE Sample mini-batch of size {$m$}: $M_k = \{\xi_{k,i}, i =1, \dots, m\}$
        
        \STATE Perform a forward-backward pass over the current mini-batch $M_k$ to compute $\overline{\mathbf{\nabla f}}_l$, $\va_l$, $\vh_l$, and $\vg_l$ ($l = 1, \ldots, L$) (see Algorithm \ref{feedforward})
        
        \FOR {$l=1, \ldots, L$}
            

            \STATE 
            \label{line_13}
            $p_l = H_g^l(k-1) \widehat{\mathbf{\nabla f}}_l H_a^l(k-1)$, where $\widehat{\mathbf{\nabla f}}_l = \overline{\mathbf{\nabla f}}_l$
            
            \STATE
            \COMMENT{When computing $ H^l_g \left(\widehat{\mathbf{\nabla f}}_l H^l_a  \right)$, L-BFGS is initialized with an identity matrix}

            \STATE $W_l = W_l - \alpha_k \cdot p_{l}$
        \ENDFOR
        
        \STATE Perform another forward-backward pass over $M_k$ to compute $\vh_l^+$, $\vg_l^+$ ($l = 1, \ldots, L$)
        
        \FOR {$l = 1,...,L$}
        
        \STATE 
        \COMMENT{Use damped L-BFGS {with skip} to update $H_g^l$ (see Section \ref{sec_1})}
        
        

        \STATE
        $\vs_g^l = {\beta} \cdot \vs_g^l + {(1-\beta)} \cdot \left( \overline{\vh_l^+} - \overline{\vh_l} \right)$,
        {$\vy_g^l = {\beta} \cdot \vy_g^l + {(1-\beta)} \cdot \left( \overline{\vg_l^+} - \overline{\vg_l} \right)$}

        \STATE $(\tilde{\vs}_g^l, {\tilde{\vy}_g^l}) = \text{DD}({\vs}_g^l, {{\vy}_g^l})$ with $H = H_g^l(k-1)$, $\mu_1 = \mu_1$, $\mu_2 = \lambda_G$
        \COMMENT{See Algorithm \ref{algo_2}}  
        
        
        {
        \IF {$(\tilde{\vs}_g^l)^\top \tilde{\vy}_g^l \ge \mu_1 (\tilde{\vy}_g^l)^\top H_g^l \tilde{\vy}_g^l$}
        \label{line_11}
            
            \STATE Use L-BFGS with $(\tilde{\vs}_g^l, {\tilde{\vy}_g^l})$ to update $H_g^l(k)$
            
        \ENDIF
        }

        \STATE 
        \COMMENT{Use {exact inversion to compute} $H_a^l$}

        \STATE
        $A_l(k) = \beta \cdot A_l(k-1) + (1-\beta) \cdot \overline{\va_{l-1} \va_{l-1}^\top}$ 
        
        \STATE $A_l^{\text{LM}}(k) = A_l(k) + \lambda_A I_A$
    
        
        
        \STATE
        \label{line_12}
        {
        $H_a^l(k) = \left( A_l^{\text{LM}}(k) \right)^{-1}$
        }

        \ENDFOR

        \ENDFOR  
    
    \end{algorithmic}
\end{algorithm}

In this section, we prove the convergence of Algorithm \ref{klbfgs_2}, a variant of K-BFGS(L).
Algorithm \ref{klbfgs_2} is very similar to our actual implementation of K-BFGS(L) (i.e. Algorithm \ref{algo_3}), except that
\begin{itemize}
    \item we skip updating $H_g^l$ if $(\tilde{\vs}_g^l)^\top \tilde{\vy}_g^l < \mu_1 (\tilde{\vy}_g^l)^\top H_g^l \tilde{\vy}_g^l$ (see Line \ref{line_11});
    
    \item we set $H_a^l$ to the exact inverse of $A_l^{\text{LM}}$ (see Line \ref{line_12});
    
    \item we use decreasing step sizes $\{ \alpha_k \}$ as specified in Theorem \ref{thm_3};
    
    \item we use the mini-batch gradient instead of the momentum gradient (see Line \ref{line_13}). 
\end{itemize}

To accomplish this, we prove Lemmas 1-3, which in addition to Assumptions AS.1-2, ensure that all of the assumptions in Theorem 2.8 in \citep{wang2017stochastic} are satisfied, and hence that the generic stochastic quasi-Newton (SQN) method, i.e. Algorithm \ref{sqn}, below converges. 
Specifically, Theorem 2.8 in \citep{wang2017stochastic} requires, in addition to Assumptions AS.1-2, the assumption
\begin{assumption}
\label{assumption_8}
There exist two positive constants $\underline{\kappa}, \bar{\kappa}$, such that
$\underline{\kappa} I \preceq H_{k} \preceq \bar{\kappa} I, \forall k$; 
for any $k \geq 2,$ the random variable $H_{k}$ depends only on $\xi_{[k-1]}$.
\end{assumption}

\begin{algorithm}
    \caption{SQN method for nonconvex stochastic optimization.}
    \label{sqn}
    \begin{algorithmic}[1]

    \REQUIRE
    Given
    {$\theta_{1} \in \mathbb{R}^{n}$}, 
    batch sizes $\left\{m_{k}\right\}_{k \geq 1}$, and step sizes $\left\{\alpha_{k}\right\}_{k \geq 1}$

    \FOR {$k=1,2,\ldots$}
        \STATE Calculate
        $\widehat{\mathbf{\nabla f}}_{k}=\frac{1}{m_{k}} \sum_{i=1}^{m_{k}} \mathbf{\nabla f}(\theta_{k}, \xi_{k, i})$

        \STATE Generate a positive definite Hessian inverse approximation $H_{k}$

        \STATE Calculate
        {$\theta_{k+1} = \theta_{k} - \alpha_{k} H_{k} \widehat{\mathbf{\nabla f}}_{k}$} 
        
        \ENDFOR  
    
    \end{algorithmic}
\end{algorithm}

In the following proofs, $\|\cdot\|$ denotes the 2-norm for vectors, and the spectral norm for matrices.


{\bf Proof of Lemma \ref{Ha_bounds}}:
\begin{proof}

Because $A_l^{\text{LM}}(k) \succeq \lambda_A I_A$, we have that $H_a^l(k) \preceq \bar{\kappa}_a I_A$, where $\bar{\kappa}_a = \frac{1}{\lambda_A}$. 

On the other hand, for any $\vx \in \R^{d_l}$, by Cauchy-Schwarz, $\langle {\va}_{l-1}(i), \vx  \rangle^2\; \leq \|\vx\|^2 \| {\va}_{l-1}(i) \|^2
\leq \|\vx\|^2 (1 + \varphi^2 d_l)$. Hence, $\left\| \overline{\va_{l-1} \va_{l-1}^\top} \right\| \leq 1 + \varphi^2 d_l$; similarly, $\| A_l(0) \| \leq 1 + \varphi^2 d_l$. Because $\| A_l(k) \| \leq \beta \| A_l(k-1) \| + (1-\beta) \left\| \overline{\va_{l-1} \va_{l-1}^\top} \right\|$, by induction, $\| A_l(k) \| \leq 1 + \varphi^2 d_l$ for any $k$ and $l$. Thus, $\| A_l^{\text{LM}}(k) \| \leq 1 + \varphi^2 d_l + \lambda_A$. Hence, $H_a^l(k) \succeq \underline{\kappa}_a I_A$, where $\underline{\kappa}_a = \frac{1}{1 + \varphi^2 d_l + \lambda_A}$.

\end{proof}

{\bf Proof of Lemma \ref{lemma_3}:}

\begin{proof}




To simplify notation, we omit the subscript $g$, superscript $l$ and the iteration index $k$ in the proof.
{Hence, our goal is to prove $\underline{\kappa}_g I \preceq H = H_g^l(k) \preceq \bar{\kappa}_g I$, 
for any $l$ and $k$.}
Let
{$(\vs_i, \vy_i)$ ($i = 1, ..., p$)}
denote the pairs used in an L-BFGS computation of 
$H$. Since $(\vs_i, \vy_i)$ was 
{\bf not skipped},
{$\frac{\vy_i^\top \bar{H}^{(i)} \vy_i}{\vs_i^\top \vy_i} \leq \frac{1}{\mu_1}$}, where
$\bar{H}^{(i)}$ denotes the matrix $H^l_g$ used at the iteration in which
{$\vs_i$ and $\vy_i$ were} computed. Note that this is not the matrix $H_i$ used in the recursive computation of $H$ at the current iterate $\theta_{k}$.

Given an initial estimate
{$H_0 =B_0^{-1}= I$} of $(G^l_g(\theta_k))^{-1}$,
the L-BFGS method updates
$H_{i}$ recursively as
{
\begin{align}
    H_{i}
    =\left(I-\rho_{i} \vs_i \vy_i^{\top} \right) H_{i-1} \left(I-\rho_{i} \vy_{i} \vs_{i}^{\top}\right)+\rho_{i} \vs_{i} \vs_{i}^{\top}, \quad i=1, \ldots, p,
\label{eq_23}
\end{align}
}
where
{$\rho_i = (\vs_i^\top \vy_i)^{-1}$}, and equivalently, 
{
\[
    B_{i}
    = B_{i-1}+\frac{{\vy}_i {\vy}_i^{\top}}{\vs_i^{\top} {\vy}_i}-\frac{B_{i-1} \vs_i \vs_i^{\top} B_{i-1}}{\vs_i^{\top} B_{i-1} \vs_i}, \quad i=1, \ldots, p,
\]
}

where $B_i = H_i^{-1}$.
Since we use DD with skipping, we have that 
{$\frac{\vs_i^\top \vs_i}{\vs_i^\top \vy_i} \leq \frac{1}{\mu_2}$}
and
{$\frac{\vy_i^\top \bar{H}^{(i)} \vy_i}{\vs_i^\top \vy_i} \leq \frac{1}{\mu_1}$}.
Note that we don't have
{$\frac{\vy_i^\top H_{i-1} \vy_i}{\vs_i^\top \vy_i} \leq \frac{1}{\mu_1}$}, so we cannot direct apply (\ref{eq_22}).
Hence, by (\ref{eq_21}), we have that
{$||B_i|| \le ||B_{i-1}|| \left( 1 + \frac{1}{\mu_1} \right)$}. Hence,
{$||B|| = ||B_p|| \le ||B_0|| \left( 1 + \frac{1}{\mu_1} \right)^p = \left( 1 + \frac{1}{\mu_1} \right)^p$}. Thus,
{$B \preceq \left( 1 + \frac{1}{\mu_1} \right)^p I$},
{$H \succeq \left( 1 + \frac{1}{\mu_1} \right)^{-p} I := \underline{\kappa}_g I$}.


On the other hand, since $\underline{\kappa}_g$ is a uniform lower bound for $H_g^l(k)$ for any $k$ and $l$, $\bar{H}^{(i)} \succeq \underline{\kappa}_g I$. Thus,
{
\begin{align*}
    \frac{1}{\mu_1}
    \geq \frac{\vy_i^\top \bar{H}^{(i)} \vy_i}{\vs_i^\top \vy_i}
    \geq \underline{\kappa}_g \frac{\vy_i^\top \vy_i}{\vs_i^\top \vy_i}
    \Rightarrow
    \frac{\vy_i^\top \vy_i}{\vs_i^\top \vy_i} \leq \frac{1}{\mu_1 \underline{\kappa}_g}.
\end{align*}
}

Hence,
using the fact that $|| u v^\top || = ||u|| \cdot ||v||$ for any vectors $u,v$,
$
    ||\rho_i \vs_i \vs_i^\top||
    = \rho_i || \vs_i || || \vs_i||
    = \frac{\vs_i^\top \vs_i}{\vs_i^\top \vy_i}
    \leq \frac{1}{\mu_2},
$
\begin{align*}
    ||H_{i}||
    & = || \left(I-\rho_i \vs_i \vy_i^{\top}\right) H_{i-1} \left(I-\rho_i \vy_i \vs_i^{\top}\right)+\rho_i \vs_i \vs_i^{\top} ||
    \\
    & = || H_{i-1} + \rho_i^2 (\vy_i^\top H_{i-1} \vy_i) \vs_i \vs_i^\top - \rho_i \vs_i \vy_i^\top H_{i-1} - \rho_i H_{i-1} \vy_i \vs_i^\top + \rho_i \vs_i \vs_i^{\top} ||
    \\
    & \leq || H_{i-1} || + || \rho_i^2 (\vy_i^\top H_{i-1} \vy_i) \vs_i \vs_i^\top || + || \rho_i \vs_i \vy_i^\top H_{i-1} || + || \rho_i H_{i-1} \vy_i \vs_i^\top || + || \rho_i \vs_i \vs_i^{\top} ||
    \\
    & \leq || H_{i-1} || + || H_{i-1} || \cdot || \rho_i^2 (\vy_i^\top \vy_i) \vs_i \vs_i^\top || + 2 \rho_i || \vs_i || \cdot || \vy_i^\top H_{i-1} || 
    + \frac{1}{\mu_2}
    \\
    & \leq || H_{i-1} || + || H_{i-1} || \cdot \frac{1}{\mu_1 \underline{\kappa}_g} \frac{1}{\mu_2} + 2 \rho_i || \vs_i || \cdot || \vy_i^\top || \cdot || H_{i-1} || + \frac{1}{\mu_2}
    \\
    & \leq || H_{i-1} || \left( 1 + \frac{1}{\mu_1 \underline{\kappa}_g} \frac{1}{\mu_2} + 2 \frac{1}{\sqrt{\mu_1 \mu_2 \underline{\kappa}_g}} \right) + \frac{1}{\mu_2}
    \\
    & = \hat{\mu}|| H_{i-1} || + \frac{1}{\mu_2}, \quad {\rm where} \quad \hat{\mu} = \left( 1 + \frac{1}{\sqrt{\mu_1 \mu_2 \underline{\kappa}_g}} \right)^2. 
\end{align*}
From the fact that $H_0 =I$, and induction, we have that $||H|| \leq \hat{\mu}^p + \frac{\hat{\mu}^p - 1}{ \hat{\mu} - 1}\frac{1}{\mu_2} \equiv \bar{\kappa}_g$. 

\end{proof}

{\bf Proof of Lemma \ref{lemma_2}:}


\begin{proof}
By Lemma \ref{Ha_bounds}, \ref{lemma_3} and the fact that
$H_k = \text{diag}\{ H_a^1(k-1) \otimes H_g^1(k-1), ..., H_a^L(k-1) \otimes H_g^L(k-1) \}$. 
\end{proof}

{\bf Proof of Theorem \ref{thm_3}:}
\begin{proof}

To show that Algorithm \ref{klbfgs_2} lies in the framework of Algorithm \ref{sqn}, it suffices to show that $H_k$ generated by Algorithm \ref{klbfgs_2} is positive definite, which is true since
{$H_k = \text{diag}\{ H_a^1(k-1) \otimes H_g^1(k-1), ..., H_a^L(k-1) \otimes H_g^L(k-1) \}$} and $H_a^l(k)$ and $H_g^l(k)$ are positive definite for all $k$ and $l$. 
Then 
by Lemma \ref{lemma_2},
and the fact that $H_k$ depends on $H_a^l(k-1)$ and $H_g^l(k-1)$, and $H_a^l(k-1)$ and $H_g^l(k-1)$ does not depend on random samplings in the $k$th iteration,
AS.\ref{assumption_8} holds.
Hence, Theorem 2.8 of \cite{wang2017stochastic} applies to Algorithm \ref{klbfgs_2}, proving Theorem \ref{thm_3}.
\end{proof}

\section{Powell's Damped BFGS Updating}
\label{sec_5}



For BFGS and L-BFGS, one needs $\vy^\top \vs >0$. However, when used to update $H_g^l$, there is no guarantee that $(\vy^l_g)^\top \vs^l_g >0$ for any layer $l = 1, \ldots,L$. 
In deterministic optimization, positive definiteness of the QN Hessian approximation $B$ (or its inverse) is maintained by performing an inexact line search that ensures that $\vs^T \vy >0$, which is always possible as long as the function being minimized is bounded below.
However, this would be expensive to do for DNN. Thus, we propose the following heuristic based on Powell's damped-BFGS approach \cite{powell1978algorithms}.

\textbf{Powell's Damping on \texorpdfstring{$B$}{TEXT}. }
Powell's damping on $B$, proposed in \cite{powell1978algorithms}, replaces $\vy$ in
the BFGS update, by
$
\tilde{\vy} = \theta \vy + (1-\theta) B \vs
$, 
where
\begin{align*}
\theta = 
\begin{cases}
\frac{(1-\mu) \vs^\top B \vs}{\vs^\top B \vs - \vs^\top \vy}, & \text{if } \vs^\top \vy < \mu \vs^\top B \vs,
\\
1, & \text{otherwise.}
\end{cases}
\end{align*}
It is easy to verify that
    {$\vs^\top \tilde{\vy} \ge \mu \vs^\top B \vs$}.

\textbf{Powell's Damping on \texorpdfstring{$H$}{TEXT}. }
In Powell's damping on $H$ (see e.g. \cite{badreddine2014sequential}),
$
\tilde{\vs} = \theta \vs + (1-\theta) H \vy
$
replaces $\vs$, where
\begin{align*}
\theta = 
\begin{cases}
\frac{(1-\mu_1) \vy^\top H \vy}{\vy^\top H \vy - \vs^\top \vy}, 
& \text{if } \vs^\top \vy < \mu_1 \vy^\top H \vy,
\\
1, 
& \text{otherwise.}
\end{cases}
\end{align*}
This is used in lines \ref{line_7} and \ref{line_8} of the DD (Algorithm  \ref{algo_2}). It is also easy to verify that 
{$\tilde{\vs}^\top \vy \ge \mu_1 \vy^\top H \vy$}.

    
    

\textbf{Powell's Damping with $B = I$. }
Powell's damping on $B$ is not suitable for our algorithms because we do not keep track of $B$. Moreover, it does not provide a simple bound on
{$\frac{\tilde{\vs}^\top \tilde{\vs}}{\tilde{\vs}^\top \tilde{\vy}}$} that is independent of $\| B \|$. Therefore, we use Powell's damping with $B = I$, in lines \ref{line_9} and \ref{line_10} of the DD (Algorithm \ref{algo_2}). 
%
%
%
It is easy to verify that it ensures that
{$\tilde{\vs}^\top \tilde\vy \geq \mu_2 \tilde{\vs}^\top \tilde{\vs}$}.


Powell's damping with $B=I$ can be interpreted as adding an 
{Levenberg-Marquardt (LM)} damping term to $B$. Note that an 
LM damping term {$\mu_2$} would lead to
{$B \succeq \mu_2 I$}. Then, the secant condition
{$\tilde{\vy} = B \tilde{\vs}$} implies
$$
\tilde{\vy}^\top \tilde{\vs} = \tilde{\vs}^\top B \tilde{\vs} \ge \mu_2 \tilde{\vs}^\top \tilde{\vs},
$$
which is the same inequality as we get using Powell's damping with $B = I$. 
Note that although the {$\mu_2$} parameter in Powell's damping with $B=I$ can be interpreted as an LM damping, we recommend setting the value of $\mu_2$ within $(0, 1]$ so that $\tilde{\vy}$ is a convex combination of $\vy$ and
{$\tilde{\vs}$}.
In all of our experimental tests, we found that the best value for the hyperparameter $\lambda$ for both K-BFGS and K-BFGS(L) 
was less than or equal to $1$, and hence that $\mu_2 = \lambda_G = \sqrt{\lambda}$ was in the interval $(0,1]$.



\subsection{Double Damping {(DD)}}

Our double damping (Algorithm \ref{algo_2}) is a two-step damping procedure, where the first step (i.e. Powell's damping on $H$) can be viewed as an interpolation between the current curvature and the previous ones, and the second step (i.e.
{Powell's damping with $B = I$}) can be viewed as an
{LM} damping. 

\begin{figure}[ht]
  \centering
  \begin{minipage}{.5\textwidth}
    \includegraphics[width=\textwidth]{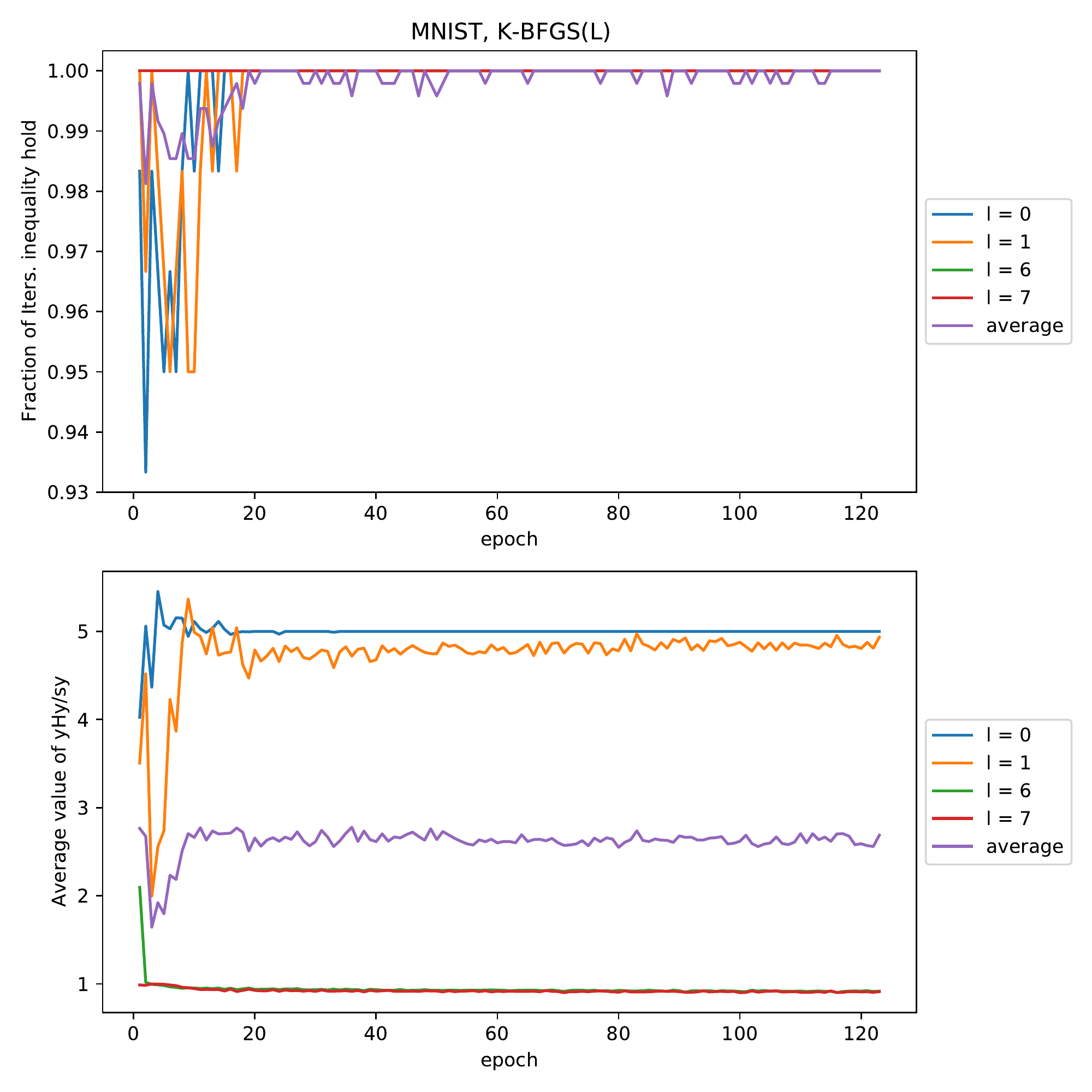}
  \end{minipage}%
  \begin{minipage}{.5\textwidth}
    \includegraphics[width=\textwidth]{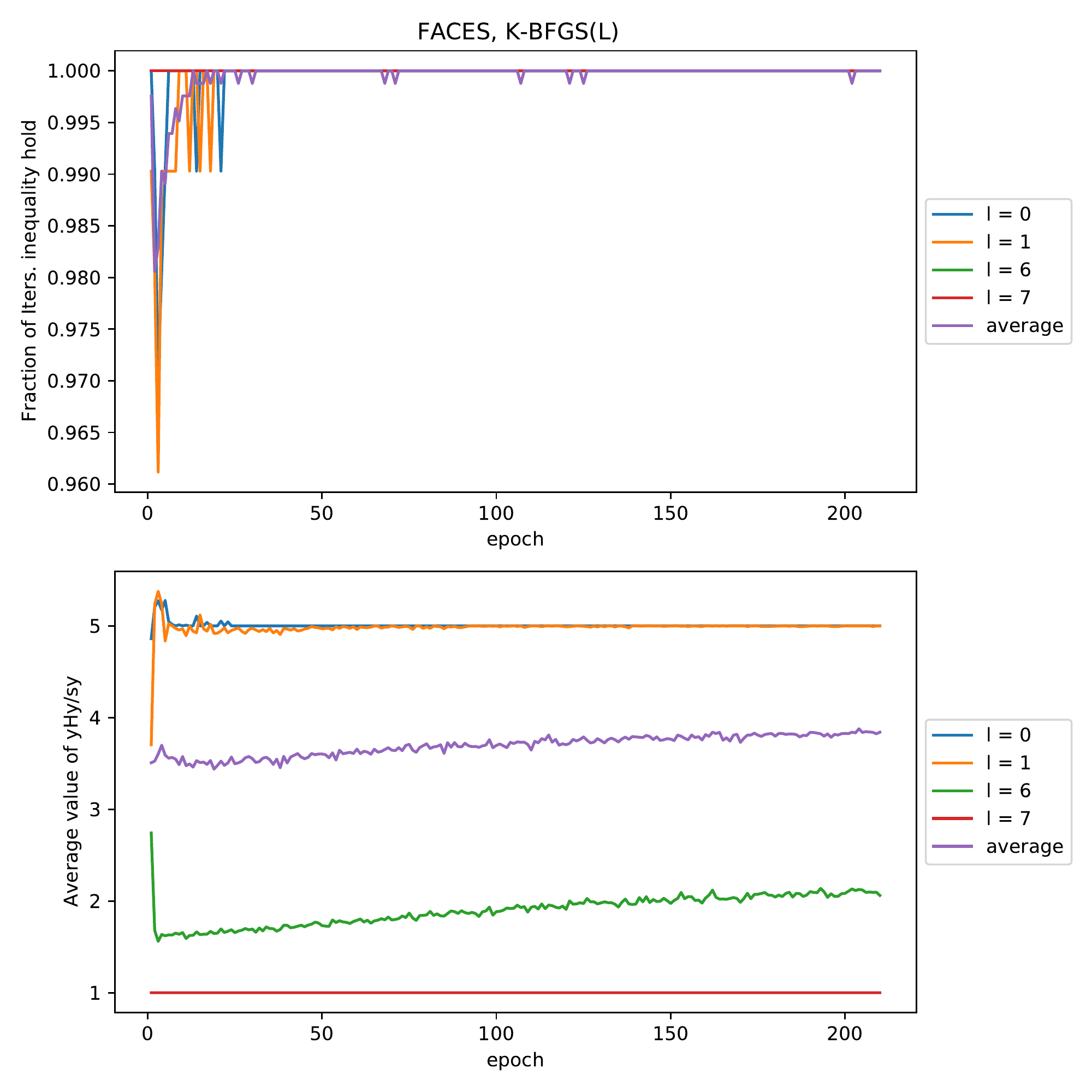}
  \end{minipage}
  \begin{minipage}{.5\textwidth}
    \includegraphics[width=\textwidth]{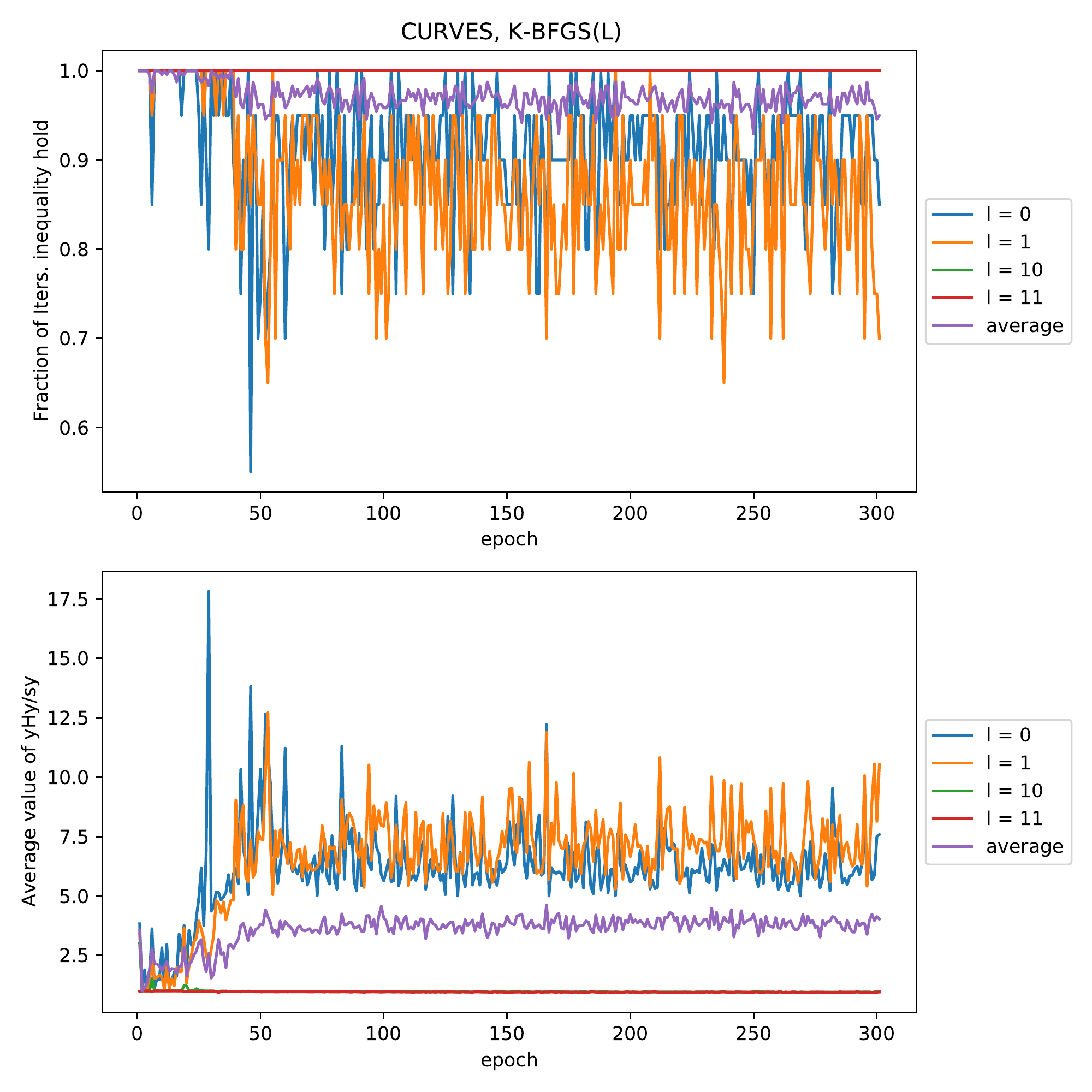}
  \end{minipage}
  
  \caption{
  Fraction of the number of iterations in each epoch, in which the inequality
  {$\frac{\vy^\top H \vy}{\vs^\top \vy} \le \frac{2}{\mu_1}$} holds (upper plots), and the average value of $\frac{\vy^T H \vy}{\vs^\top \vy}$ (lower plots) in each epoch.
  Legends in each plot assign different colors to represent each layer $l$. }
  \label{fig_16}
\end{figure}

  

Recall that there is no guarantee that
{$\frac{\vy^\top H \vy}{\vs^\top \vy} \le \frac{2}{\mu_1}$} holds after DD. While we skip using pairs that do not satisfy this inequality, when updating $H_g^l$ in proving the convergence of the K-BFGS(L) variant Algorithm \ref{klbfgs_2} , we use all $(\vs, \vy)$ pairs to update
$H_g^l$ in our implementations of both K-BFGS and K-BFGS(L) . However, whether one skips or not makes only slight difference in the performance of these algorithms, because as our empirical testing has shown, at least 90\% of the iterations satisfy
{$\frac{\vy^\top H \vy}{\vs^\top \vy} \le \frac{2}{\mu_1}$}, even if we don't skip. See Figure \ref{fig_16} which reports results on this for K-BFGS(L) when tested on the MNIST, FACES and CURVES datasets. 

\section{Implementation Details {and More Experiments}}
\label{sec_11}

\subsection{Description of Competing Algorithms}

\subsubsection{KFAC}

We first describe KFAC in Algorithm \ref{kfac}. Note that $G_l$ in KFAC refers to the $G$ matrices in \cite{martens2015optimizing}, which is different from the $G_l$ in K-BFGS. 

        
\begin{algorithm}
    \caption{{\bf KFAC}}
    \label{kfac}
    \begin{algorithmic}[1]

    \REQUIRE
    Given $\theta_0$, batch size {$m$}, and learning rate
    {$\alpha$}, {damping value $\lambda$}, inversion frequency $T$

    \FOR {$k=1,2,\ldots$}
        \STATE Sample mini-batch of size $m$: $M_k = \{\xi_{k,i}, i =1, \dots, m\}$ 
        
        \STATE Perform a forward-backward pass over the current mini-batch $M_k$ (see Algorithm \ref{feedforward})
        
        \FOR {$l=1,2,\ldots L$}
            
            
            \STATE
            $p_l = H_g^{l} \widehat{\mathbf{\nabla f}}_l H_a^l$
            \label{line_5}
            

            \STATE 
            {$W_l = W_l - \alpha \cdot p_{l}$}.
        \ENDFOR
        
        \STATE Perform another pass over $M_k$ with 
        $y$ sampled from the predictive distribution
        \label{line_2}
        
        \STATE Update
        $A_l = \beta \cdot A_l + (1-\beta) \cdot \overline{\va_{l-1} \va_{l-1}^\top}$,
        $G_l = \beta \cdot G_l + (1-\beta) \cdot \overline{\vg_{l} \vg_{l}^\top}$ 
        \label{line_3}
        
        \IF {$k \le T$ or $k \equiv 0 \pmod{T}$}
            
            \STATE 
            Recompute $H_a^l = (A_l + \sqrt{\lambda} I)^{-1}$, $H_g^l = (G_l + \sqrt{\lambda} I)^{-1}$
            \label{line_4}
        
        \ENDIF

        \ENDFOR  
    
    \end{algorithmic}
\end{algorithm}

\subsubsection{Adam/RMSprop}

We implement Adam and RMSprop exactly as in \cite{kingma2014adam} and \cite{hinton2012neural}, respectively. Note that the only difference between them is that Adam does bias correction for the 1st and 2nd moments of gradient while RMSprop does not.


\subsubsection{Initialization of Algorithms}

We now describe how each algorithm is initialized. 
For all algorithms, $\widehat{\mathbf{\nabla f}}$ is always initialized as zero.

For second-order information, we use a "warm start" to estimate the curvature when applicable. In particular, we  estimate the following curvature information using the entire training set before we start updating parameters. The information gathered is
        \begin{itemize}
        \item $A_l$ for K-BFGS and K-BFGS(L);
        \item $A_l$ and $G_l$ for KFAC;
        \item $\nabla f \odot \nabla f$ for RMSprop;
        \item Not applicable to Adam because of the bias correction.
        \end{itemize}


Lastly, for K-BFGS and K-BFGS(L), $H_a^l$ is always initially set to an identity matrix. $H_g^l$ is also initially set to an identity matrix in K-BFGS; for K-BFGS(L), when updating $H_g^l$ using L-BFGS, the incorporation of the information from the $p$ $(\vs,\vy)$ pairs is applied to an initial matrix that is set to an identity matrix. Hence, the above initialization/warm start costs are roughly twice as large for KFAC as they are for K-BFGS and K-BFGS(L).

\subsection{Autoencoder Problems}

\label{sec_12}

Table \ref{table_2} lists information about the three datasets, {namely,
MNIST\footnote{
Downloadable at \url{http://yann.lecun.com/exdb/mnist/}
},
FACES\footnote{
Downloadable at \url{www.cs.toronto.edu/~jmartens/newfaces_rot_single.mat} 
}, and
CURVES\footnote{
Downloadable at \url{www.cs.toronto.edu/~jmartens/digs3pts_1.mat}
}}.
Table \ref{table_1} specifies the architecture of the 3 problems, {where binary entropy $\mathcal{L} \left(a_{L}, y\right) = \sum_n [y_n \log a_{L,n} + (1-y_n) \log (1-a_{L,n})]$, MSE $\mathcal{L} \left(a_{L}, y\right) = \frac{1}{2} \sum_n (a_{L,n} - y_n)^2$}. Besides the loss function in Table \ref{table_1}, we further add a regularization term $\frac{\eta}{2} ||\theta||^2$ to the loss function, where $\eta = 10^{-5}$. 

\begin{table}[ht]
  \caption{Info for 3 datasets}
  \label{table_2}
  \centering
  \begin{tabular}{cccc}
    \toprule
    Dataset & \# data points
    & \# training examples
    & \# testing examples
    \\
    \midrule
    MNIST & 70,000 & 60,000 & 10,000
    \\
    FACES & 165,600 & 103,500 & 62,100
    \\
    CURVES & 30,000 & 20,000 & 10,000
    \\
    \bottomrule
  \end{tabular}
\end{table}

\begin{table}[ht]
  \caption{Architecture of 3 auto-encoder problems}
  \label{table_1}
  \centering
  \begin{tabular}{cll}
    \toprule
    Dataset
    & Layer width \& activation
    & Loss function
    \\
    \midrule
    MNIST
    & [784, 1000, 500, 250, 30, 250, 500, 1000, 784]
    & binary entropy
    \\
    & [ReLU, ReLU, ReLU, linear, ReLU, ReLU, ReLU, sigmoid]
    \\
    \midrule
    FACES
    & [625, 2000, 1000, 500, 30, 500, 1000, 2000, 625]
    & MSE
    \\
    & [ReLU, ReLU, ReLU, linear, ReLU, ReLU, ReLU, linear]
    \\
    \midrule
    CURVES
    & [784, 400, 200, 100, 50, 25, 6, 25, 50, 100, 200, 400, 784] 
    & binary entropy
    \\
    & [ReLU, ReLU, ReLU, ReLU, ReLU, linear, 
    \\
    & ReLU, ReLU, ReLU, ReLU, ReLU, sigmoid]
    \\
    \bottomrule
  \end{tabular}
\end{table}

\subsection{Specification of Hyper-parameters}

In our experiments, we focus our tuning effort onto learning rate and damping. The range of the tuning values is listed below:
\begin{itemize}
    \item learning rate $\alpha_k = \alpha \in \{$ 1e-5, 3e-5, 1e-4, 3e-4, 1e-3, 3e-3, 1e-2, 3e-2, 1e-1, 3e-1, 1, 3, 10 $\}$. 
    
    \item damping:
    \begin{itemize}
        
        \item $\lambda$ for K-BFGS, K-BFGS(L) and KFAC: $\lambda \in \{$
        {3e-3, 1e-2, 3e-2, 1e-1, 3e-1, 1, 3} $\}$.  
        
        \item $\epsilon$ for RMSprop and Adam: $\epsilon \in \{$ 1e-10, 1e-8, 1e-6, 1e-4, 1e-3, 1e-2, 1e-1 $\}$. 
        
        \item Not applicable for SGD with momentum.
    \end{itemize}

\end{itemize}

\begin{table}[ht]
  \caption{
  Best (learning rate, damping) for Figure \ref{fig_5}
  }
  \label{table_5}
  \centering
  \begin{tabular}{ccccccc}
    \toprule
    & K-BFGS
    & K-BFGS(L)
    & KFAC
    & Adam
    & RMSprop
    & SGD-m
    \\
    \midrule
    MNIST
    & (0.3, 0.3)
    & (0.3, 0.3)
    & (1, 3)
    & (1e-4, 1e-4)
    & (1e-4, 1e-4) 
    & (0.03, -)
    \\
    FACES
    & (0.1, 0.1)
    & (0.1, 0.1)
    & (0.1, 0.1)
    & (1e-4, 1e-4)
    & (1e-4, 1e-4) 
    & (0.01, -)
    \\
    CURVES
    & (0.1, 0.01)
    & (0.3, 0.3)
    & (0.3, 0.3)
    & (1e-3, 1e-3)
    & (1e-3, 1e-3) 
    & (0.1, -)
    \\
    \bottomrule
  \end{tabular}
\end{table}


The best hyper-parameters were those that produced the lowest value of the deterministic loss function encountered at the end of every epoch until the algorithm was terminated. These values were used in Figure \ref{fig_5} and are listed in Table \ref{table_5}.
Besides the tuning hyper-parameters, we also list other fixed hyper-parameters with their values:
\begin{itemize}
    \item Size of minibatch $m$ = 1000, which is also suggested in \cite{botev2017practical}.
    
    \item Decay parameter:
    \begin{itemize}
        \item K-BFGS, K-BFGS(L): $\beta = 0.9$;
        
        \item KFAC: $\beta = 0.9$;
        
        \item RMSprop, Adam: Following the notation in \cite{kingma2014adam}, we use $\beta_1=\beta_2=0.9$;\footnote{The default value of $\beta_2$ recommended in \cite{kingma2014adam} is 0.999. Hence, we also tested $\beta_2 = 0.999$, and obtained results that were similar to those presented in Figure \ref{fig_5}  (i.e., with $\beta_2=0.9$). For the sake of fair comparison, we chose to report the results with $\beta_2=0.9$.}
        
        \item SGD with momentum: $\beta = 0.9$.
    \end{itemize}
    
    \item Other:
    \begin{itemize}
        \item $\mu_1 = 0.2$ in double damping (DD):
        
        We recommend to leave the value as default because $\mu_1$ represents the "ratio" between current and past, which is scaling invariant;

        \item
        Number of $(\vs, \vy)$ pairs stored for K-BFGS(L) $p = 100$:
        
        It might be more efficient to use a smaller $p$ for the narrow layers. We didn't investigate this for simplicity and consistency;
        
    
        \item
        Inverse frequency $T = 20$ in KFAC.
    \end{itemize}

\end{itemize}

\subsection{Sensitivity to Hyper-parameters}
\label{sec_16}

\begin{figure}[H]
\centering

\begin{minipage}{.33\textwidth}
  \centering
  \includegraphics[width=\textwidth]{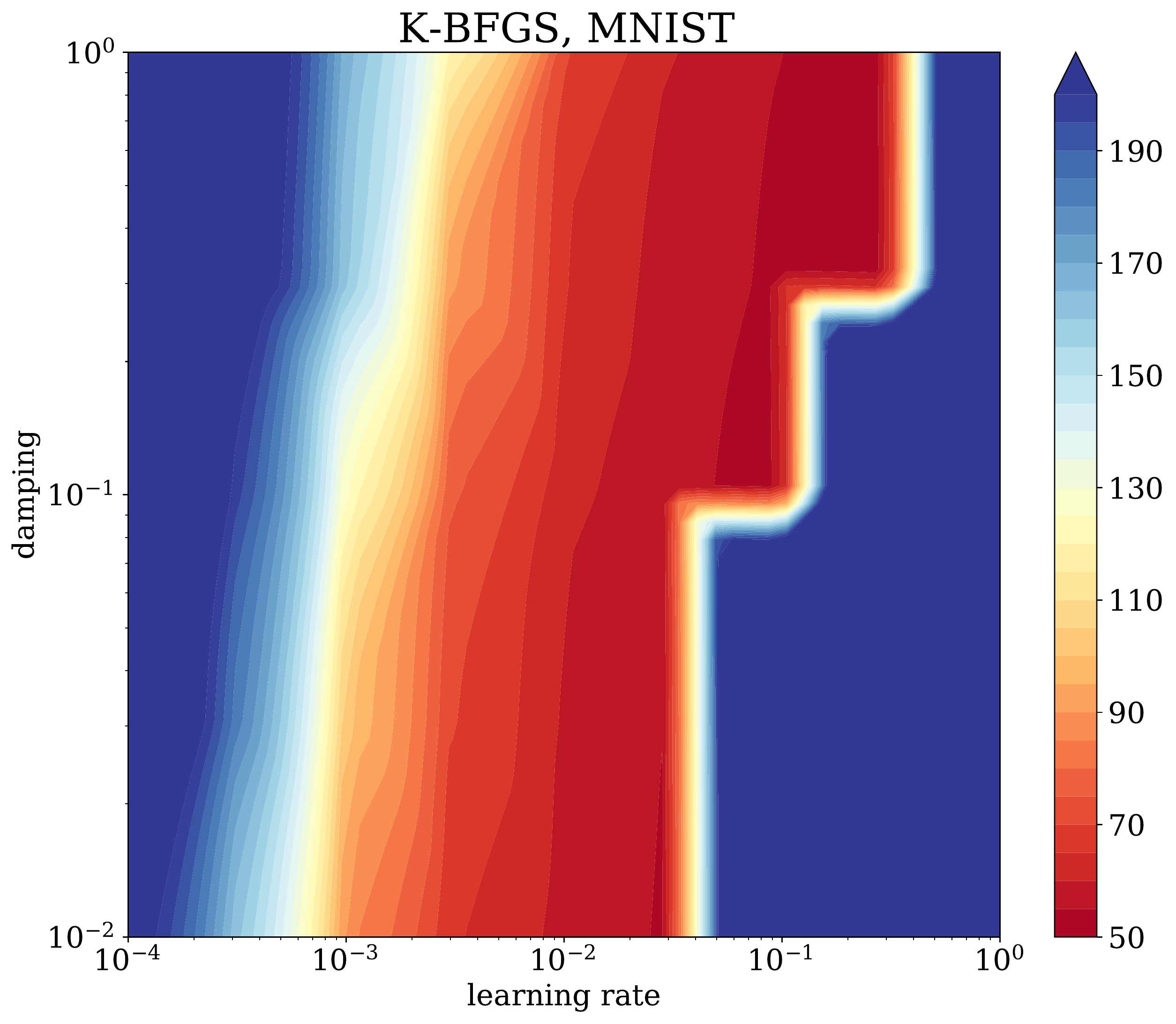}
\end{minipage}%
\begin{minipage}{.33\textwidth}
  \centering
  \includegraphics[width=\textwidth]{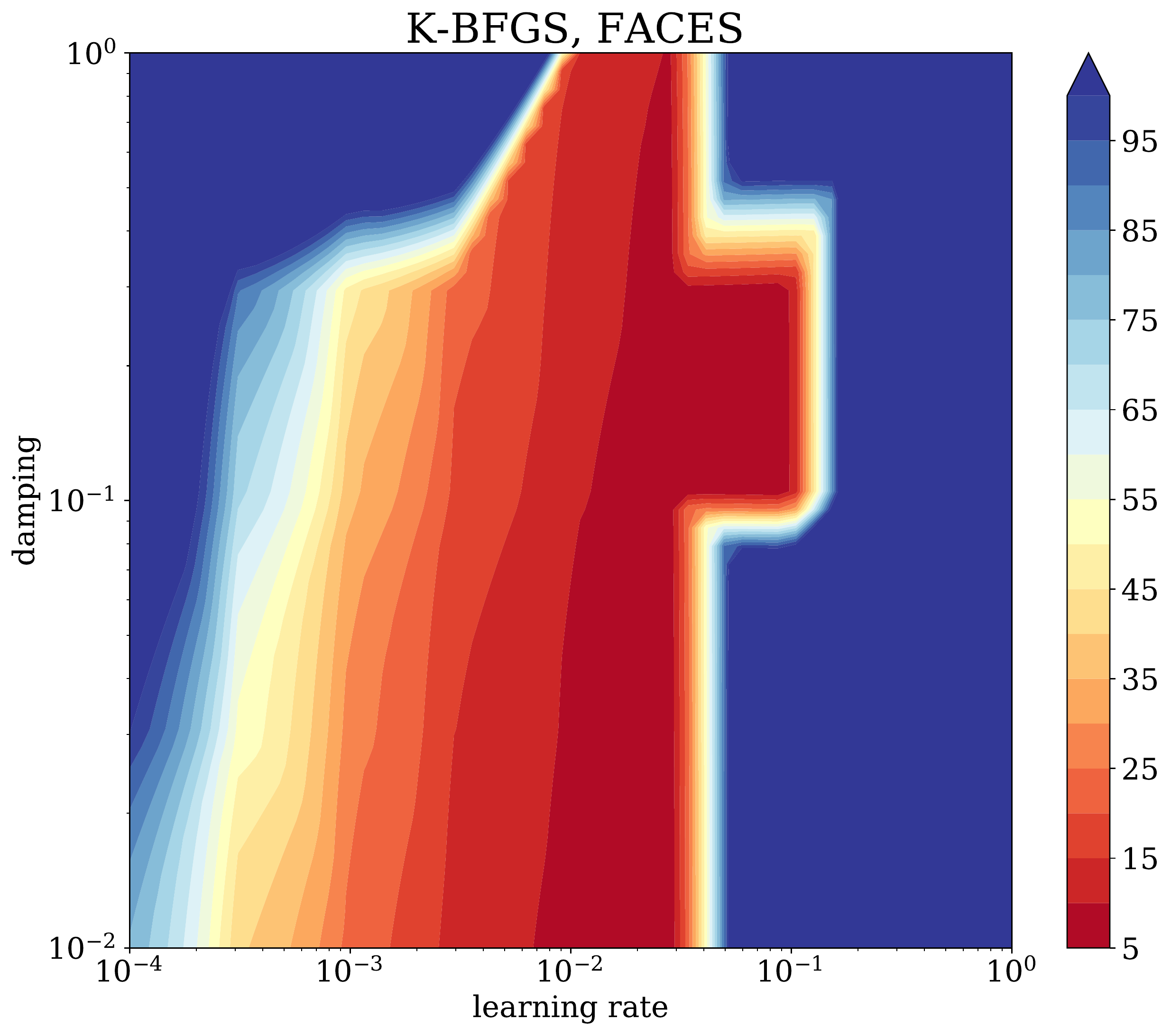}
\end{minipage}%
\begin{minipage}{.33\textwidth}
  \centering
  \includegraphics[width=\textwidth]{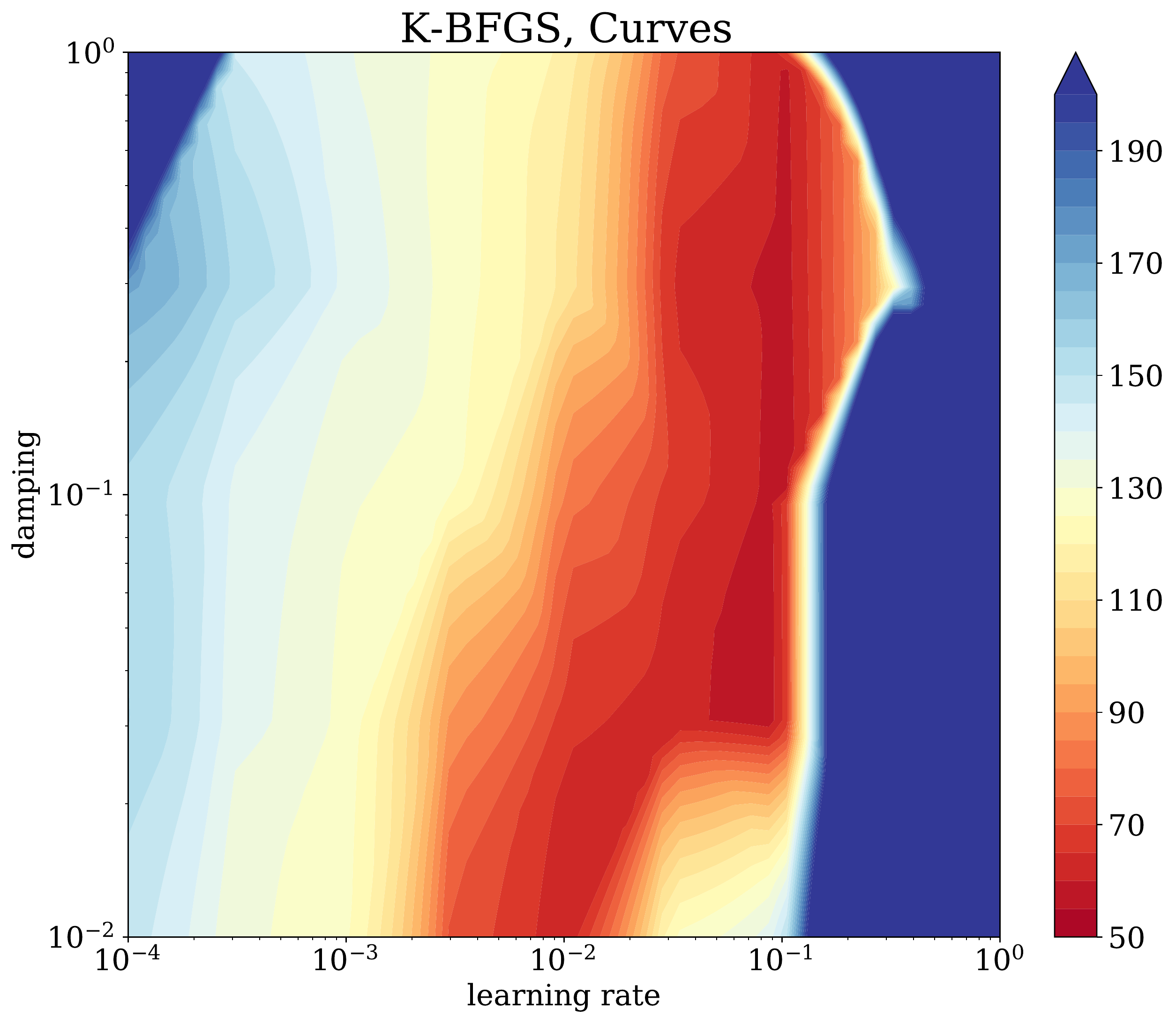}
\end{minipage}

\begin{minipage}{.33\textwidth}
  \centering
  \includegraphics[width=\textwidth]{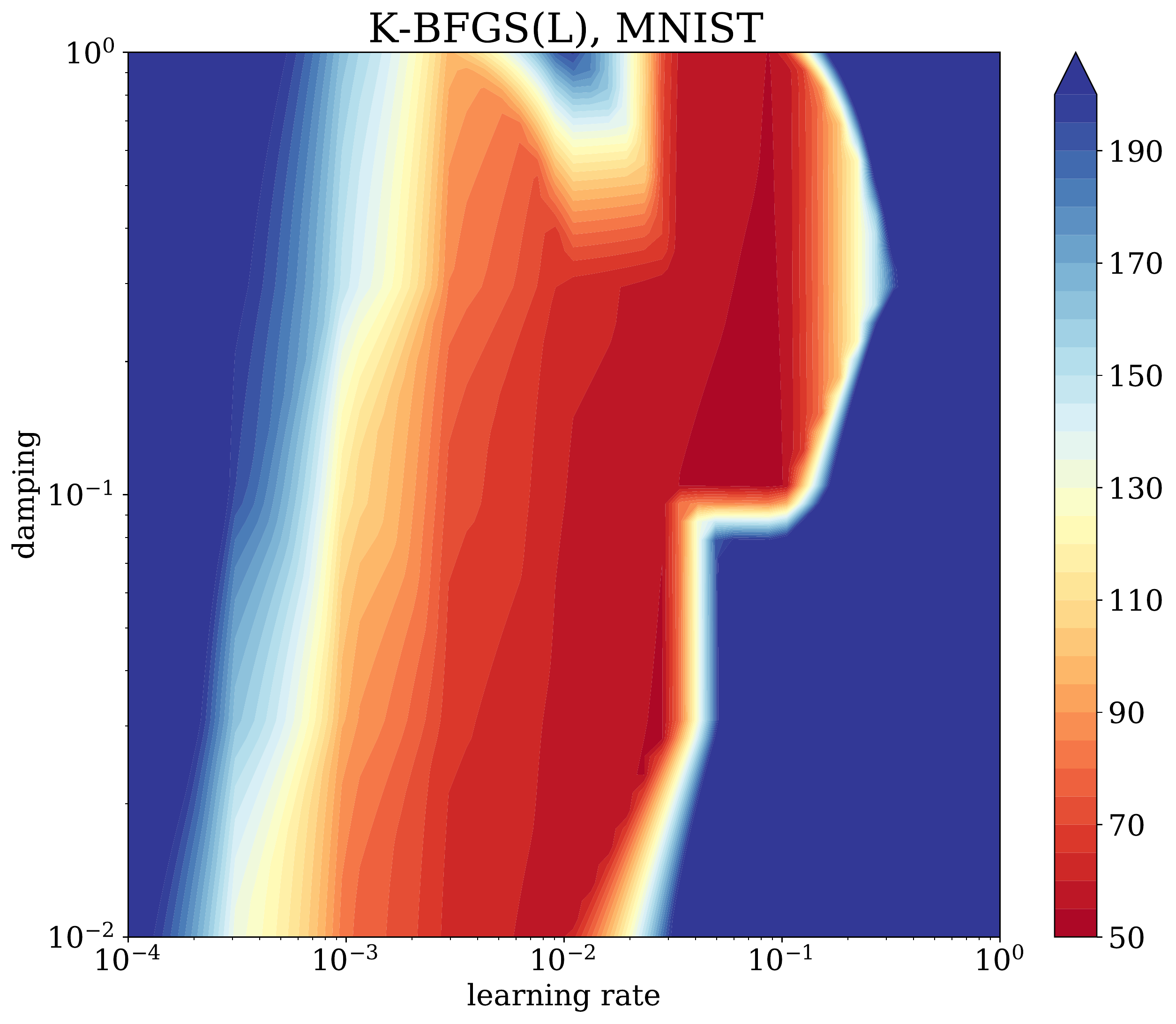}
\end{minipage}%
\begin{minipage}{.33\textwidth}
  \centering
  \includegraphics[width=\textwidth]{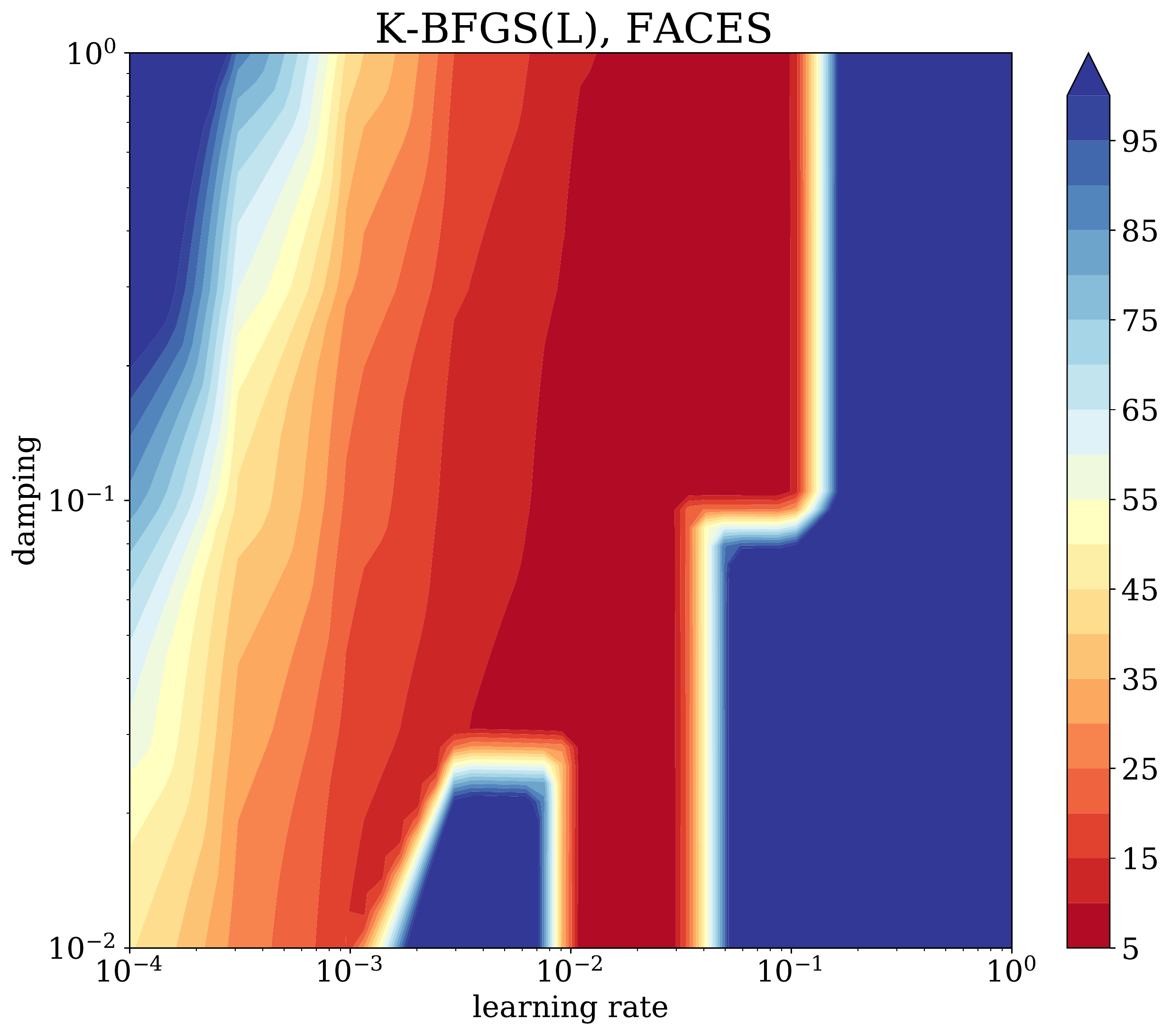}
\end{minipage}%
\begin{minipage}{.33\textwidth}
  \centering
  \includegraphics[width=\textwidth]{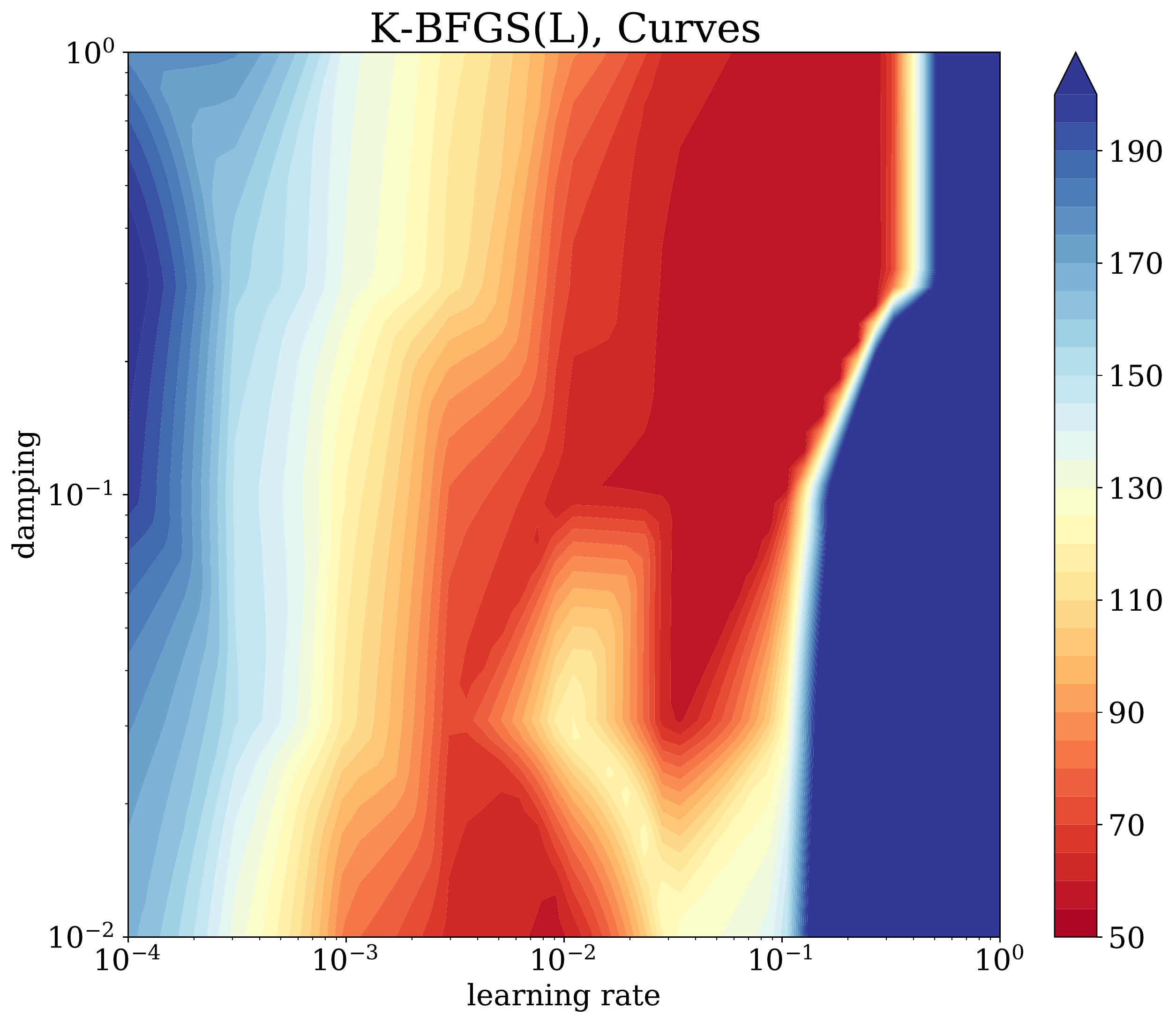}
\end{minipage}

\caption{
Landscape of loss w.r.t hyper-parameters (i.e. learning rate and damping). The left, middle, right columns depict results for MNIST, FACES, CURVES, which are terminated after 500, 2000, 500 seconds (CPU time), respectively, for K-BFGS (upper) and K-BFGS(L) (lower) row. 
}
\label{fig_15}
\end{figure}

Figure \ref{fig_15}
shows the sensitivity of K-BFGS and K-BFGS(L) to hyper-parameter values (i.e. learning rate and damping). The $x$-axis corresponds to the learning rate $\alpha$, while the $y$-axis correspond to the damping value $\lambda$.
Color corresponds to the loss after a certain amount of CPU time. We can see that both K-BFGS and K-BFGS(L) are robust within a fairly wide range of hyper-parameters. 

To get the plot, we first obtained training loss with $\alpha \in \{$1e-4, 3e-4, 1e-3, 3e-3, 1e-2, 3e-2, 1e-1, 3e-1, 1$\}$ and $\lambda \in \{$1e-2, 3e-2, 1e-1, 3e-1, 1$\}$, {and then drew contour lines of the loss within the above ranges.}





{
\subsection{Experimental Results Using Mini-batches of Size 100}

{We repeated our experiments using mini-batches of size 100 for all algorithms (see Figures \ref{fig_2}, \ref{fig_3}, and \ref{fig_4}). For each figure, the upper (lower) rower depict training loss (testing (mean square) error), whereas
  the left (right) column depicts training/test progress versus epoch (CPU time), respectively.
}



The best hyper-parameters were those that produce the lowest value of the deterministic loss function encountered at the end of every epoch until the algorithm was terminated. These values were used in Figures \ref{fig_2}, \ref{fig_3}, \ref{fig_4} and are listed in Table \ref{table_6}.

{
Our proposed methods continue to demonstrate advantageous performance, both in training and testing.
It is interesting to note that, whereas for a minibatch size of 1000, KFAC slightly outperformed K-BFGS(L), for a minibatch size of 100, K-BFGS(L) clearly outperformed KFAC in training on CURVES. }

\begin{table}[ht]
  \caption{Best (learning rate, damping) for Figures \ref{fig_2}, \ref{fig_3}, \ref{fig_4}}
  \label{table_6}
  \centering
  \begin{tabular}{ccccccc}
    \toprule
    & K-BFGS
    & K-BFGS(L)
    & KFAC
    & Adam
    & RMSprop
    & SGD-m
    \\
    \midrule
    MNIST
    & (0.1, 0.3)
    & (0.1, 0.3)
    & (0.1, 0.3)
    & (1e-4, 1e-4)
    & (1e-4, 1e-4) 
    & (0.03, -)
    \\
    FACES
    & (0.03, 0.03)
    & (0.03, 0.3)
    & (0.03, 0.3)
    & (3e-5, 1e-4)
    & (3e-5, 1e-4) 
    & (0.01, -)
    \\
    CURVES
    & (0.3, 1)
    & (0.3, 0.3)
    & (0.03, 0.1)
    & (3e-4, 1e-4)
    & (3e-3, 1e-4) 
    & (0.03, -)
    \\
    \bottomrule
  \end{tabular}
\end{table}

\begin{figure}[H]
  \centering
  \includegraphics[width=\textwidth]{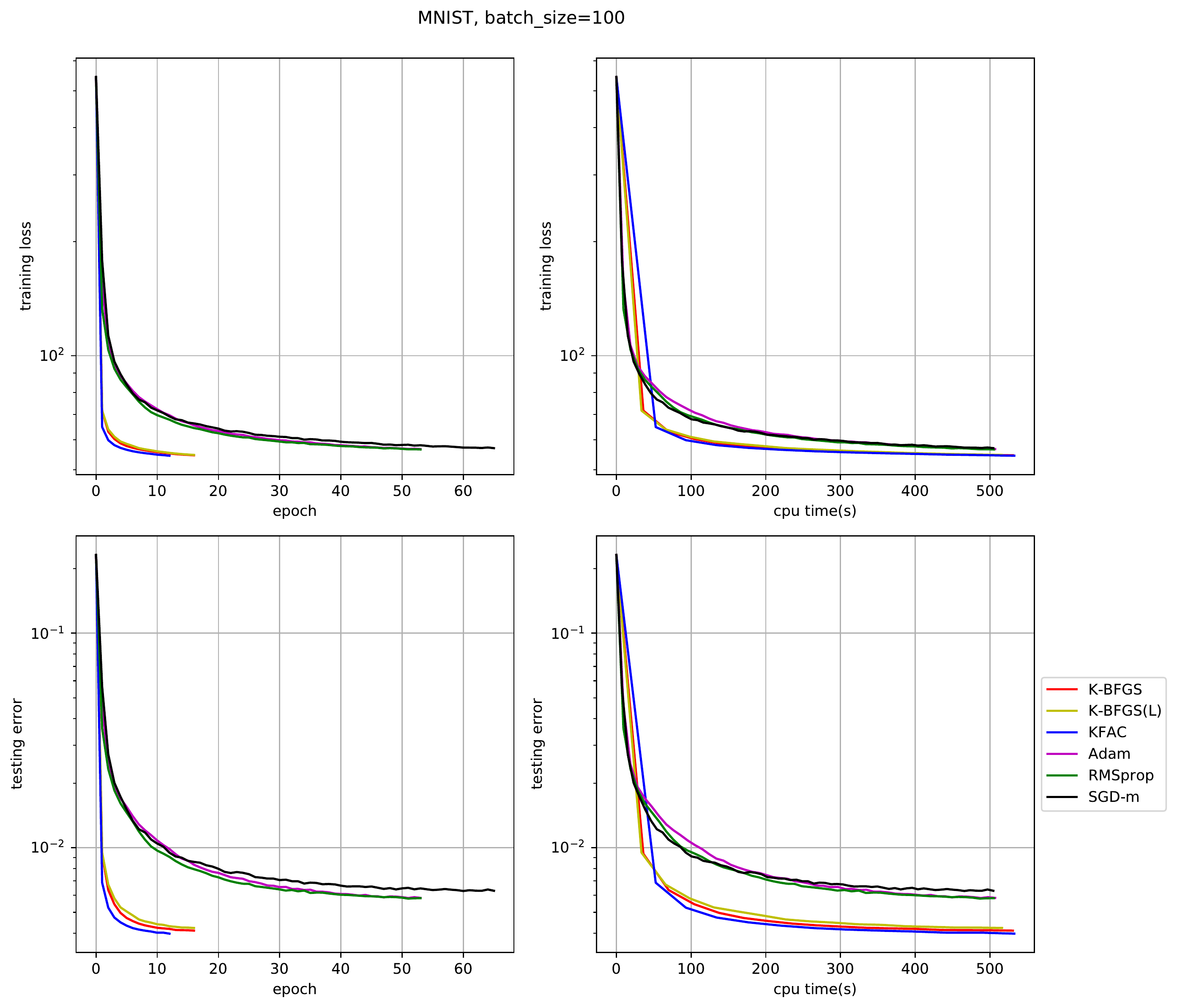}
  \caption{
  Comparison between algorithms on MNIST with batch size 100
  }
  \label{fig_2}
\end{figure}

\begin{figure}[H]
  \centering
  \includegraphics[width=\textwidth]{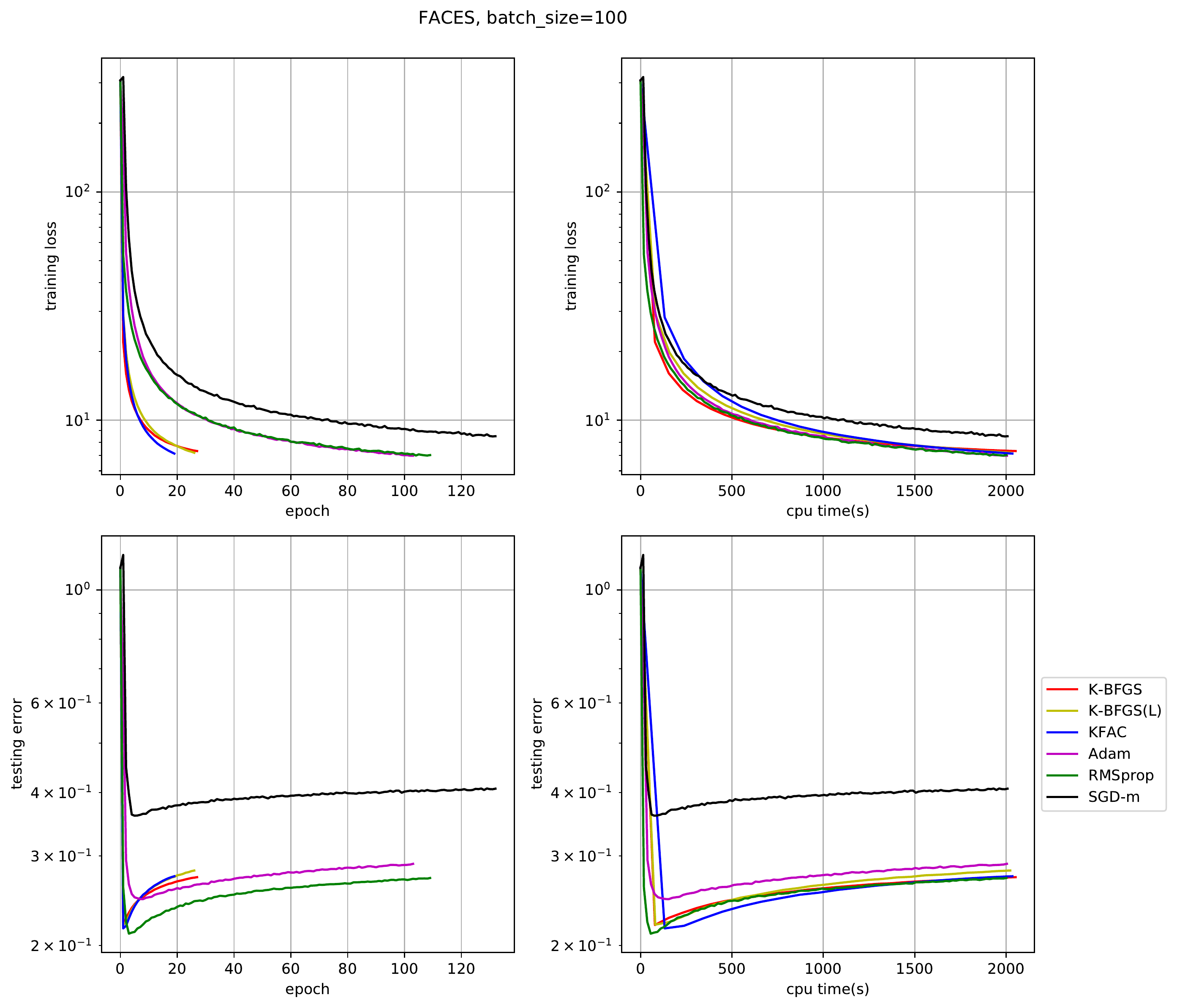}
  \caption{
  Comparison between algorithms on FACES with batch size 100
  }
  \label{fig_3}
\end{figure}

\begin{figure}[H]
  \centering
  \includegraphics[width=\textwidth]{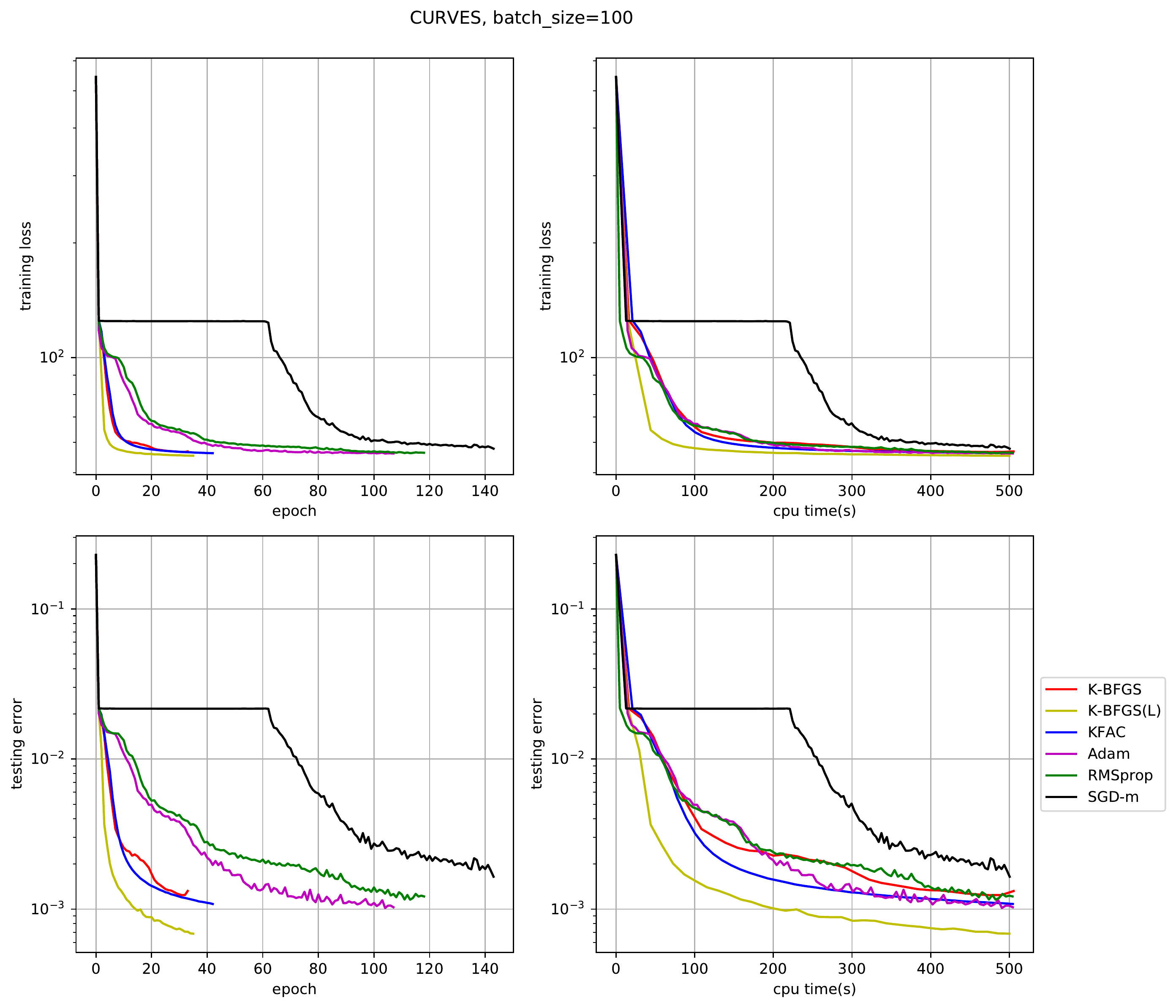}
  \caption{
  Comparison between algorithms on CURVES with batch size 100
  }
  \label{fig_4}
\end{figure}
}

\subsection{Doubling the Mini-batch for the Gradient at Almost No Cost}
\label{sec_2}


\begin{figure}[ht]
  \centering
  \includegraphics[width=\textwidth]{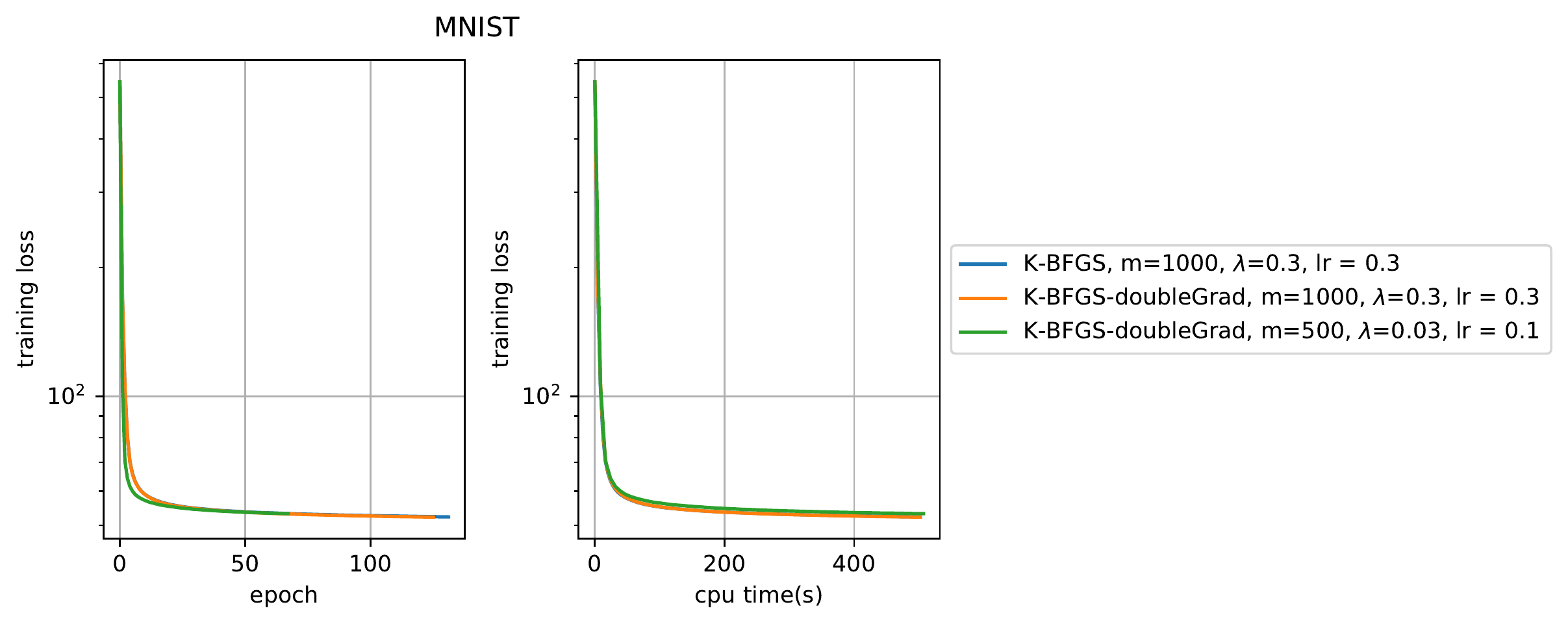}
  \includegraphics[width=\textwidth]{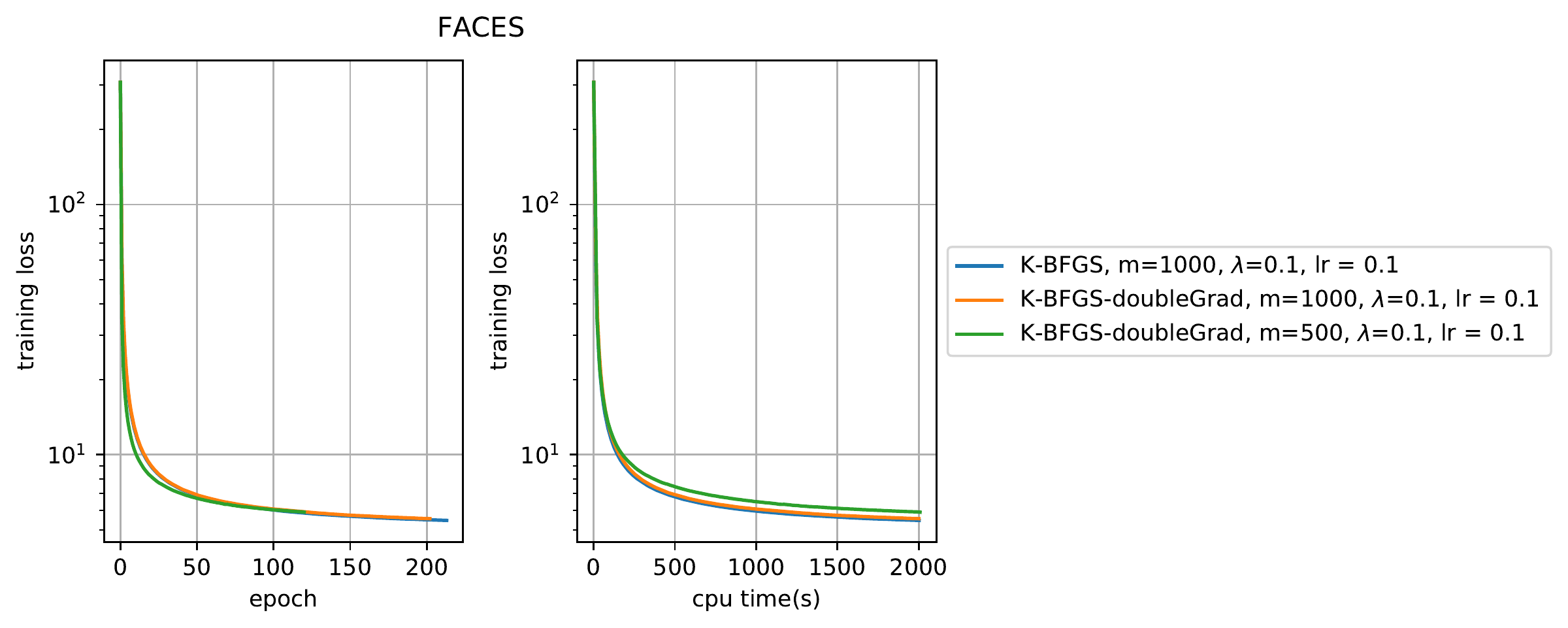}
  \includegraphics[width=\textwidth]{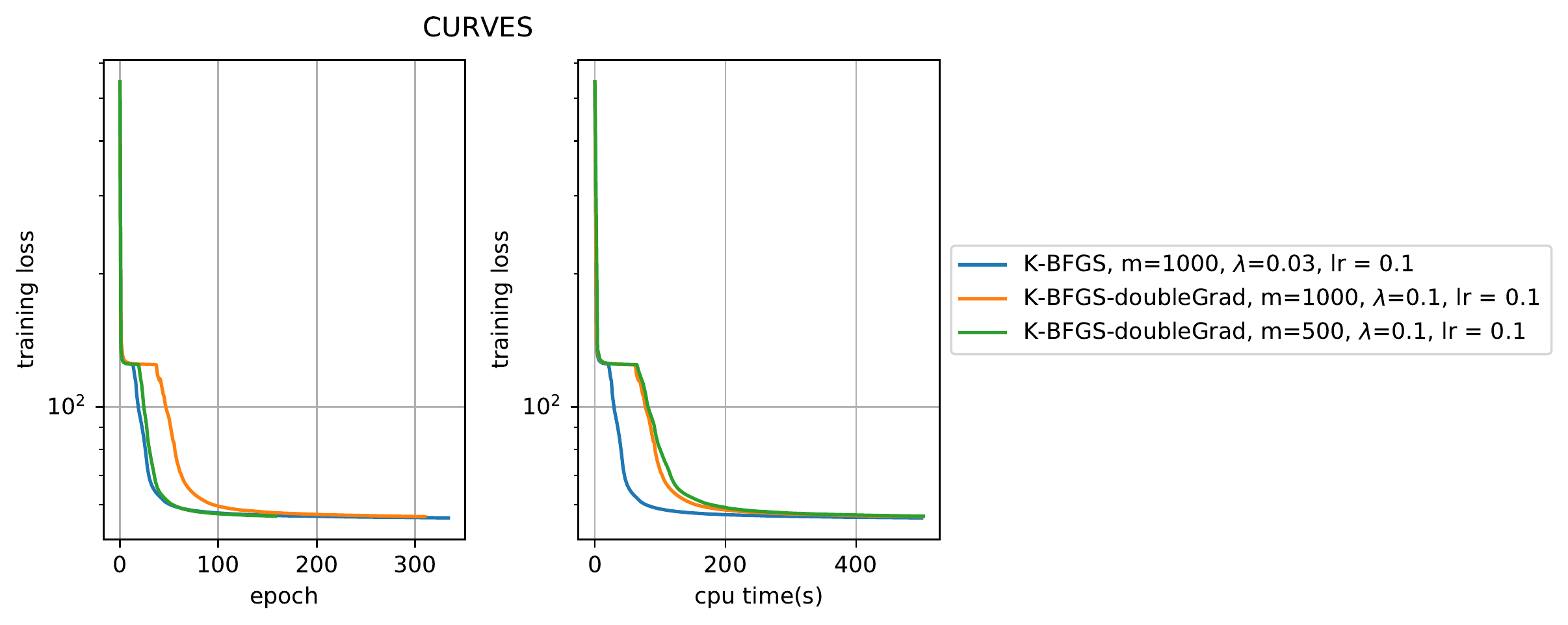}
  \caption{
Comparison between K-BFGS and its "double-grad" variants
  }
  \label{fig_12}
\end{figure}

Compared with other methods mentioned in this paper, our K-BFGS and K-BFGS(L) methods have the extra advantage of being able to double the size of the minibatch used to compute the stochastic gradient with almost no extra cost, which might be of particular interest in a highly stochastic setting. To accomplish this, we can make use of the stochastic gradient  $\overline{\mathbf{\nabla f}}^+$ computed in the \textbf{second} pass of the previous iteration that is needed for computing the $(\overline{\vs},\overline{\vy})$ pair for applying the BFGS  or L-BFGS updates, and average it with the stochastic gradient $\overline{\mathbf{\nabla f}}$ of the current iteration. For example if the size of minibatch is $m = 1000$, the above "double-grad-minibatch" method computes a stochastic gradient from 2000 data points at each iteration, except at the very first iteration.  



The results of some preliminary experiments are depicted in Figure \ref{fig_12}, where we compare
an earlier version of the K-BFGS algorithm (Algorithm \ref{algo_3}), which uses a slightly different variant of Hessian-action to update $H_a^l$,
using a size of $m=1000$ for mini-batches, with its "double-grad-minibatch" variants for $m = 500$ and  $1000$. Even though "double-grad" does not help much in these experiments, our K-BFGS algorithm performs stably across these different variants. These results indicate that there is a potential for further improvements; e.g., a finer grid search might identify hyper-parameter values that 
result in better performing algorithms.


\clearpage












\newpage

\end{document}